\documentclass{article} 
\usepackage{iclr2027_conference,times}
\usepackage{natbib}

\usepackage{amsmath,amsfonts,bm}

\def\eqref#1{equation~\ref{#1}}

\def\1{\bm{1}}

\DeclareMathAlphabet{\mathsfit}{\encodingdefault}{\sfdefault}{m}{sl}
\SetMathAlphabet{\mathsfit}{bold}{\encodingdefault}{\sfdefault}{bx}{n}

\usepackage{bm}
\usepackage{amsmath}
\usepackage{amssymb}
\usepackage{mathtools}
\usepackage{amsthm}
\usepackage{multirow}
\usepackage{pifont}  
\usepackage[table]{xcolor}
\usepackage{threeparttable}

\usepackage{algorithm}
\usepackage{algorithmic}
\usepackage{listings}
\usepackage{float}
\usepackage{caption}
\usepackage{booktabs}
\usepackage{hyperref}
\usepackage{url}
\theoremstyle{plain}
\newtheorem{theorem}{Theorem}[section]
\newtheorem{proposition}[theorem]{Proposition}

\newtheorem{corollary}[theorem]{Corollary}
\theoremstyle{definition}

\theoremstyle{remark}

\title{ProtoFlow: Prototype-Guided Flow Matching for Multivariate Time Series Forecasting}

\author{Shibo Feng$^{1}$, \quad Wanjin Feng$^{2}$, \quad Yang Qiu$^{1}$, \quad Deheng Ye$^{1}$, \quad Peilin Zhao$^{3, *}$, \quad Chunyan Miao $^{1}$\thanks{Corresponding author} \\
 $^{1}$Nanyang Technological University, \quad $^{2}$ Tsinghua
University, \quad $^{3}$Shanghai Jiao Tong University \\
        }

\iclrfinalcopy 
\begin{document}

\maketitle
\lhead{Preprint}
\begin{abstract}
Generative modeling has shown strong promise for multivariate time mseries (MTS) forecasting, especially scale to high-dimensional settings. Diffusion-based methods achieve competitive performance but typically require many sampling steps at inference. VAE-based non-iterative forecasting frameworks have therefore emerged as an efficient alternative. Within this line of work, vector quantization (VQ) enables controllable latent space modeling by mapping multivariate series into compact discrete representations. Existing VQ-based forecasting methods, however, typically rely on autoregressive (AR) token generation, which suffers from exposure bias and training-inference mismatch. Flow matching provides an efficient non-autoregressive alternative for latent forecasting, but existing formulations usually initialize transport from a generic Gaussian prior. We instead observe that the trained VQ codebook already captures representative latent prototypes and can thus serve as a more informative prior for flow matching. Based on this insight, we propose \textbf{ProtoFlow}, a forecasting framework that combines vector-quantized autoencoding with \textbf{Proto}type-prior \textbf{Flow} matching. Our method first maps multivariate sequences into a discrete latent space, then constructs a structured prior from the learned codebook, and finally learns a DiT-based rectified flow to transport samples from this prior to future latent representations conditioned on historical observations. By replacing generic noise initialization with a learned prototype prior, ProtoFlow avoids the rollout mismatch of AR token prediction and promotes faster training convergence. Extensive experiments on benchmark datasets show that it consistently achieves superior forecasting performance with efficient inference.
\end{abstract}

\section{Introduction}
Multivariate time series (MTS) forecasting is an important research problem with broad applications in domains such as economics \cite{huang2024generative}, traffic systems \cite{shao2022decoupled, jiang2023pdformer}, energy management \cite{chen2024sdformer}, and weather prediction \cite{wu2020adversarial, wu2023interpretable}. In many real-world scenarios, forecasting involves a large number of variables evolving over time, making high-dimensional MTS forecasting particularly challenging. As a result, developing effective forecasting models for high-dimensional multivariate sequences has become an increasingly important topic.

Generative models provide a natural and effective framework for this setting, and have shown strong potential for scaling to high-dimensional MTS forecasting \cite{ sohl2015deep, dinh2016density, kingma2019introduction, song2020score}. Among them, diffusion-based methods have achieved competitive performance in recent years \cite{rasul2021autoregressive, shen2023non, meijer2024rise, cao2024timedit, wang2024timedart, ruan2025vision}. However, their inference typically requires many denoising steps, leading to substantial computational cost.

This limitation has motivated increasing interest in VAEs-based non-iterative forecasting frameworks, which perform generation in a learned latent space. Within this family, vector quantization (VQ) provides a controllable latent space modeling strategy by mapping complex multivariate sequences into a compact discrete codebook. Such vector-quantized representations have recently shown promise in token-based time series modeling and forecasting \cite{talukder2024totem, chen2024sdformer, feng2025hdt}. However, existing VQ-based forecasting methods typically rely on autoregressive (AR) token generation. Since AR models are usually trained with teacher forcing, they depend on ground-truth previous tokens during training but on their own predictions during inference, resulting in train--test mismatch and error accumulation during rollout \cite{schmidt2019generalization, bengio2015scheduled}.


Flow matching (FM) learns continuous transport and can
support non-autoregressive generation~\cite{liu2022flow}.
Recent time series applications include TSFlow, which
introduces Gaussian-process priors~\cite{kollovieh2024flow};
TFM, which learns history-conditioned Neural SDEs through
simulation-free training~\cite{zhang2024trajectory};
and FlowTime, which adopts teacher-forced autoregressive
forecasting~\cite{el2025probabilistic}.
These methods primarily model transport in observation
space, where high-dimensional MTS forecasting requires
capturing complex temporal and cross-variable dependencies
in the joint predictive distribution.
ProtoFlow addresses this challenge by jointly generating
future sequences in a compact vector-quantized latent
space. Its prototype prior incorporates learned latent
centers, empirical frequencies, and local variability
into the source distribution, providing a structured
initialization for conditional transport.
Empirically, this initialization supports earlier FM
training stabilization than its Gaussian-prior counterpart,
while enabling accurate few-step forecasting
in high-dimensional MTS settings.


Based on this observation, we propose \textbf{ProtoFlow},
a prototype-prior flow matching framework for
high-dimensional multivariate time series forecasting.
ProtoFlow combines vector-quantized representations
with a structured source prior derived from learned
codebook prototypes and their empirical statistics.
It jointly generates future latent sequences through
non-autoregressive flow matching, avoiding the
ground-truth versus generated-token conditioning
mismatch associated with teacher-forced autoregressive
rollout. Experiments on five real-world benchmarks
with several prediction horizons demonstrate
strong overall performance against competitive
diffusion- and flow-based baselines with few ODE
sampling steps. Further analyses show earlier FM
training stabilization and reduced sampling-step
sensitivity relative to Gaussian initialization,
supporting the effectiveness of reusing learned
latent structure for transport initialization.


\noindent\textbf{Main Contributions.}
\begin{itemize}
    \item We propose \textbf{ProtoFlow}, a framework for
    high-dimensional multivariate time series forecasting
    that combines vector-quantized representations with
    non-autoregressive flow matching to jointly generate
    future latent sequences, avoiding the conditioning
    mismatch associated with teacher-forced
    autoregressive rollout.

    \item We develop a structured source prior from
    learned prototype centers and empirical frequencies,
    extending the codebook's
    role from latent quantization to transport
    initialization. Compared with Gaussian
    initialization, this design empirically supports
    earlier FM training stabilization and reduced
    sensitivity to the sampling-step budget.

    \item Experiments on five real-world benchmarks
    with testing prediction horizons demonstrate
    strong overall probabilistic and deterministic
    forecasting performance against competitive
    diffusion- and flow-based baselines with
    substantially fewer sampling steps.
\end{itemize}

\section{Preliminaries}
We briefly review vector-quantized variational
autoencoders (VQ-VAE)~\cite{van2017neural} and
rectified flow matching, which underpin ProtoFlow.

\noindent\textbf{VQ-VAE Modeling.}
A VQ-VAE comprises an encoder $\mathcal{E}_{\psi}$,
a decoder $\mathcal{G}_{\phi}$, and a learned codebook
$\mathcal{Z}\in\mathbb{R}^{K\times n_z}$.
Given a time series
$\mathcal{X}_t\in\mathbb{R}^{\tau\times D}$,
the encoder produces
$\hat{z}=\mathcal{E}_{\psi}(\mathcal{X}_t)
\in\mathbb{R}^{L\times n_z}$,
where $L=\lfloor\tau/\rho\rfloor$ and $\rho$
is the temporal downsampling ratio.
Standard vector quantization assigns each latent
vector to its nearest codebook entry:
\begin{equation}
a_{\ell}
=
\arg\min_{k\in\{1,\dots,K\}}
\|\hat{z}^{\ell}-\mathcal{Z}_k\|_2^2,
\qquad
z_q^{\ell}=\mathcal{Z}_{a_{\ell}}.
\end{equation}
The discrete index sequence
$\mathbf{a}=(a_1,\dots,a_L)$ identifies the
quantized embeddings $z_q=q(\hat{z})$, from which
the decoder reconstructs
$\hat{\mathcal{X}}_t=\mathcal{G}_{\phi}(z_q)$.

Autoregressive token-based forecasting models
generate these indices conditioned on historical
information $\mathbf{c}$:
\begin{equation}
p(\mathbf{a}\mid\mathbf{c})
=
\prod_{\ell=1}^{L}
p(a_{\ell}\mid a_{<\ell},\mathbf{c}).
\end{equation}
Under teacher-forced training, conditioning uses
ground-truth preceding tokens, whereas inference
relies on generated tokens. This discrepancy can
propagate prediction errors during sequential
generation, motivating non-autoregressive
alternatives for latent forecasting.

\noindent\textbf{Rectified Flow Matching (RFM)
\cite{liu2022flow}.}
RFM learns a velocity field that transports samples
from a source distribution to a target distribution.
Given conditioning information $\bm{c}$, independently
sample $\bm{x}_0\sim\pi_0$ and
$\bm{x}_1\sim\pi_1(\cdot\mid\bm{c})$, and construct
\begin{equation}
\bm{x}_t=(1-t)\bm{x}_0+t\bm{x}_1,
\qquad t\sim\mathcal{U}(0,1).
\end{equation}
The network $\bm{v}_{\theta}$ is trained to predict
the displacement $\bm{x}_1-\bm{x}_0$:
\begin{equation}
\mathcal{L}_{\mathrm{RFM}}(\theta)
=
\mathbb{E}_{\bm{c},t,\bm{x}_0,\bm{x}_1}
\left[
\left\|
\bm{v}_{\theta}(\bm{x}_t,t\mid\bm{c})
-
(\bm{x}_1-\bm{x}_0)
\right\|_2^2
\right].
\label{eq:rfm_objective}
\end{equation}
At inference, a source sample is transformed by
numerically integrating the learned ODE:
\begin{equation}
\frac{\mathrm{d}\hat{\bm{x}}_t}{\mathrm{d}t}
=
\bm{v}_{\theta}(\hat{\bm{x}}_t,t\mid\bm{c}),
\qquad
\hat{\bm{x}}_0\sim\pi_0.
\label{eqn1}
\end{equation}
ProtoFlow applies this framework to vector-quantized
latent embeddings, using a structured source prior
derived from learned codebook prototypes and their
training statistics to initialize conditional transport.

\begin{figure*}[t!]
\vspace{-0.1in}
  \centering
   \includegraphics[width=0.95\linewidth]{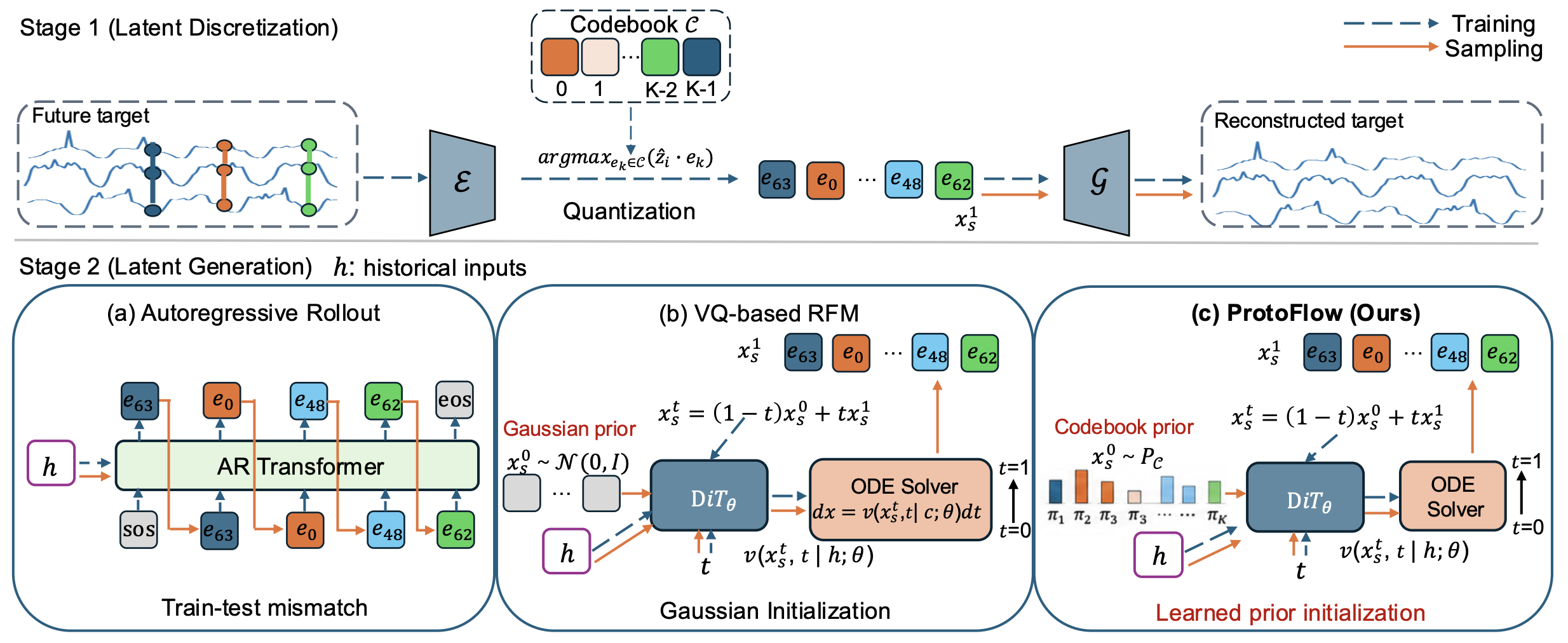}
   \vspace{-0.12in}
   \caption{Illustration of our ProtoFlow. Stage 1 discretizes the future target sequence into compact latent tokens via similarity-based VQ latent modeling. Stage 2 compares three latent-generation paradigms under the same tokenized space: (a) Autoregressive rollout, which suffers from training--inference mismatch; (b) Gaussian-initialized flow matching, which starts transport from an unstructured prior; and (c) The proposed ProtoFlow, which initializes conditional rectified flow from a learned codebook prior for latent flow matching generation. The future time features $\text{y}^{c}$ is omitted here.
 }
   \label{fig:overall}
\end{figure*}

\section{Method}

In this section, we present \textbf{ProtoFlow}, a prototype-prior flow-matching framework for multivariate time series forecasting. ProtoFlow first learns a vector-quantized tokenizer to map future sequences into a compact discrete latent space, and then performs conditional rectified flow matching initialized from a learned codebook prior. Thus, the codebook serves not only as a latent discretizer, but also as a structured prior for latent generation.

\subsection{Vector-Quantized Tokenizer}
\label{sec:VQ-stage1}

Stage 1 learns a vector-quantized tokenizer that maps
future target sequences into compact latent representations
for subsequent flow-based forecasting. Given a target
sequence $\mathcal{X}_t\in\mathbb{R}^{\tau\times D}$,
we use an encoder $\mathcal{E}_{\psi}$, a decoder
$\mathcal{G}_{\phi}$, and a learned codebook
$\mathcal{Z}\in\mathbb{R}^{K\times n_z}$ containing
$K$ embeddings of dimension $n_z$. The encoder produces
a latent sequence
$\hat{z}=\mathcal{E}_{\psi}(\mathcal{X}_t)
\in\mathbb{R}^{L\times n_z}$,
where $L=\lfloor\tau/\rho\rfloor$ and $\rho$ denotes
the temporal downsampling ratio.

Following SDformer~\cite{chen2024sdformer}, we adopt
similarity-driven vector quantization. Specifically,
we $\ell_2$-normalize the encoder outputs and codebook
embeddings, and assign each latent vector $\hat{z}^{\ell}$
to the code with the highest cosine similarity:
\begin{equation}
a_{\ell}
=
\underset{k\in\{1,\dots,K\}}{\arg\max}
\frac{
\langle\hat{z}^{\ell},\mathcal{Z}_k\rangle
}{
\|\hat{z}^{\ell}\|_2\|\mathcal{Z}_k\|_2
},
\qquad
z_q^{\ell}=\mathcal{Z}_{a_{\ell}}.
\label{eq:similarity_quantization}
\end{equation}
For unit-normalized vectors, this criterion reduces
to maximizing their inner product, making code
assignment depend on latent direction rather than
magnitude. The resulting quantized embedding sequence is
\begin{equation}
\mathcal{X}_s
=
[z_q^1,\dots,z_q^L]
=
q(\mathcal{E}_{\psi}(\mathcal{X}_t)),
\label{eqn:compression-ratio}
\end{equation}
which provides the latent target representation for
Stage 2.

The decoder reconstructs the target sequence as
$\hat{\mathcal{X}}_t=\mathcal{G}_{\phi}(\mathcal{X}_s)$.
We train the tokenizer using a reconstruction loss
and a cosine-based commitment loss:
\begin{equation}
\mathcal{L}_{\mathrm{VQ}}
=
\|\mathcal{X}_t-\hat{\mathcal{X}}_t\|_2^2
+
\frac{\beta}{L}
\sum_{\ell=1}^{L}
\left(
1-
\left\langle
\hat{z}^{\ell},
\operatorname{sg}[z_q^{\ell}]
\right\rangle
\right),
\label{eq:vq_loss_main}
\end{equation}
where $z_q=\mathcal{X}_s$,
$\hat{z}^{\ell}$ and $z_q^{\ell}$ are
$\ell_2$-normalized encoder and codebook embeddings,
respectively, and $\operatorname{sg}[\cdot]$
denotes the stop-gradient operator.
The coefficient $\beta$ controls the commitment
strength, encouraging encoder outputs to align
with their assigned codebook embeddings. We update the codebook using exponential moving averages
and reset inactive entries to stabilize codebook learning, which is shown in Appendix Eqn~\ref{eq:reset}.

\subsection{Codebook-Prior Rectified Flow Matching}
\label{sec:proto_rfm}

After learning the Stage 1 tokenizer, we model the
conditional distribution of future latent sequences
using rectified flow matching. The target embedding
sequence is
$\mathbf{x}_s^1=\mathcal{X}_s\in\mathbb{R}^{L\times n_z}$,
where $L=\lfloor\tau/\rho\rfloor$ and $\rho$ denotes
the temporal downsampling ratio. The superscripts
$0$ and $1$ denote the source and target endpoints
of latent transport, respectively.

\noindent\textbf{Codebook-prior Initialization.}
A key design of ProtoFlow is to initialize latent
transport from a structured codebook prior.
We construct this prior from the learned codebook
$\mathcal{Z}$ and training latent statistics.
Each component is specified by a mixing weight
$\pi_k$, a prototype center
$\mu_k\in\mathbb{R}^{n_z}$ associated with the
$k$-th codebook entry, and local perturbation
parameters $(B_k,\sigma_k)$.
Here, $B_k\in\mathbb{R}^{n_z\times r}$ contains
the principal local basis directions,
$\sigma_k\in\mathbb{R}_{\geq 0}^{r}$ contains
their scales, and $r$ denotes the local covariance
rank. The single-position source prior is
\begin{equation}
p_{\mathcal{Z}}(z)
=
\sum_{k=1}^{K}
\pi_k\,p_k(z;\mu_k,B_k,\sigma_k,\gamma),
\qquad
\pi_k\geq 0,\quad
\sum_{k=1}^{K}\pi_k=1,
\label{eq:proto_prior}
\end{equation}
where $\gamma\geq 0$ controls the perturbation
magnitude, and each component is defined by the
normalized sampling procedure below.

For each latent position $\ell$, we independently
sample a component index and a local perturbation:
\begin{equation}
s_\ell^0\sim\mathrm{Cat}(\pi_1,\dots,\pi_K),
\qquad
\varepsilon_\ell\sim\mathcal{N}(0,I_r),
\end{equation}
and construct
\begin{equation}
\mathbf{x}_{s,\ell}^0
=
\mathrm{Norm}\!\left(
\mu_{s_\ell^0}
+
\gamma B_{s_\ell^0}
\bigl(\sigma_{s_\ell^0}\odot\varepsilon_\ell\bigr)
\right),
\label{eq:proto_source_position}
\end{equation}
where $\mathrm{Norm}(\cdot)$ denotes $\ell_2$
normalization and $\odot$ denotes elementwise
multiplication. The parameters $(B_k,\sigma_k)$
describe the local perturbation before normalization. Stacking all positions yields the source latent sequence
\begin{equation}
\mathbf{x}_s^0 = [x_{s,1}^0,x_{s,2}^0,\dots,x_{s,L}^0].
\label{eq:proto_source}
\end{equation}
This initialization is therefore centered on learned latent prototypes and their local neighborhoods, rather than on a generic isotropic distribution.

\noindent\textbf{Conditional Rectified Flow Matching.}
Let $\mathbf{c}=(\mathcal{H}_p,\mathbf{y}^{c})$
denote the forecasting conditions, comprising the
encoded historical context and known temporal
features for the prediction window.
For a target--condition pair
$(\mathbf{x}_s^1,\mathbf{c})$ and an independently
sampled source $\mathbf{x}_s^0\sim P_{\mathcal{Z}}$,
we define the linear interpolation
\begin{equation}
\mathbf{x}_s^t
=
(1-t)\mathbf{x}_s^0+t\mathbf{x}_s^1,
\qquad t\in[0,1],
\label{eq:proto_xt}
\end{equation}
where $t$ denotes flow time. Its target velocity is
\begin{equation}
\mathbf{u}_s^t
=
\frac{\mathrm{d}\mathbf{x}_s^t}{\mathrm{d}t}
=
\mathbf{x}_s^1-\mathbf{x}_s^0.
\label{eq:proto_ut}
\end{equation}

We parameterize the vector field as
$\mathbf{v}_{\theta}(\mathbf{x}_s^t,t\mid\mathbf{c})
=
\mathrm{DiT}_{\theta}(\mathbf{x}_s^t,t,\mathbf{c})$
and minimize
\begin{equation}
\mathcal{L}_{\mathrm{FM}}
=
\mathbb{E}_{\substack{
(\mathbf{x}_s^1,\mathbf{c})\sim q_{\mathrm{data}},\\
\mathbf{x}_s^0\sim P_{\mathcal{Z}},\;t\sim q_t
}}
\left[
\left\|
\mathbf{v}_{\theta}(\mathbf{x}_s^t,t\mid\mathbf{c})
-
(\mathbf{x}_s^1-\mathbf{x}_s^0)
\right\|_F^2
\right],
\label{eq:proto_loss}
\end{equation}
where $q_{\mathrm{data}}$ denotes the empirical
distribution of target latent sequences and their
conditions, $q_t$ is the training-time sampling
distribution on $[0,1]$, and $\|\cdot\|_F$ denotes
the Frobenius norm.

\noindent\textbf{Sampling.}
At inference, we sample
$\hat{\mathbf{x}}_s^{t_0}\sim P_{\mathcal{Z}}$
and integrate the learned vector field over a grid
$0=t_0<t_1<\cdots<t_M=1$ using Euler updates:
\begin{equation}
\hat{\mathbf{x}}_s^{t_{j+1}}
=
\hat{\mathbf{x}}_s^{t_j}
+
\Delta t_j\,
\mathbf{v}_{\theta}
\bigl(\hat{\mathbf{x}}_s^{t_j},t_j\mid\mathbf{c}\bigr),
\qquad
\Delta t_j=t_{j+1}-t_j,
\label{eq:proto_euler}
\end{equation}
for $j=0,\dots,M-1$.
Each update jointly evolves the entire future
latent sequence, enabling non-autoregressive
forecast generation. The final latent sample is
decoded into the forecasting target:
\begin{equation}
\hat{\mathcal{X}}_t
=
\mathcal{G}_{\phi}(\hat{\mathbf{x}}_s^1).
\label{eq:proto_decode}
\end{equation}

\noindent\textbf{Architecture.}
We use a DiT-based vector field~\cite{peebles2023scalable}
conditioned on flow time, historical context $\mathcal{H}_p$,
and future temporal features $\mathbf{y}^{c}$.
The context encoder combines convolutions and
Transformer-XL~\cite{dai2019transformer} with relative
positional embeddings, while flow-time embeddings
control adaptive LayerNorm.
Figures~\ref{fig:overall} and~\ref{fig:details}
illustrate the architecture.

\subsection{Theoretical Motivation for Codebook-Prior Initialization}
\label{sec:proto_theory}

The benefit of codebook-prior initialization can be understood from the geometry of rectified transport in latent space. Let $\mathcal P_1$ denote the target latent distribution of $\mathbf{x}_s^1$, and let $\mathcal P_0$ denote an arbitrary source prior over the same latent space $\mathbb{R}^{L\times n_z}$. For any coupling $\Gamma\in\Pi(\mathcal P_0,\mathcal P_1)$, consider the rectified interpolation
\begin{equation}
\mathbf{x}_s^t=(1-t)\mathbf{x}_s^0+t\mathbf{x}_s^1,\qquad t\in[0,1],
\label{eq:main_rectified_interp}
\end{equation}
where $(\mathbf{x}_s^0,\mathbf{x}_s^1)\sim\Gamma$. We define the expected squared rectified transport cost under $\Gamma$ as
\begin{equation}
\mathcal C_{\mathrm{rect}}(\Gamma)
:=
\mathbb E_{(\mathbf{x}_s^0,\mathbf{x}_s^1)\sim\Gamma}
\big[
\|\mathbf{x}_s^1-\mathbf{x}_s^0\|_F^2
\big].
\label{eq:main_rect_cost}
\end{equation}

\begin{proposition}
\label{prop:main_w2}
Assume $\mathcal P_0$ and $\mathcal P_1$ have finite second moments. Then the minimum expected squared rectified transport cost from $\mathcal P_0$ to $\mathcal P_1$ satisfies
\begin{equation}
\inf_{\Gamma\in\Pi(\mathcal P_0,\mathcal P_1)}
\mathcal C_{\mathrm{rect}}(\Gamma)
=
W_2^2(\mathcal P_0,\mathcal P_1),
\label{eq:main_w2_result}
\end{equation}
where $W_2(\mathcal P_0,\mathcal P_1)$ is the $2$-Wasserstein distance under the Frobenius ground cost.
\end{proposition}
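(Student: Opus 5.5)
This is essentially the Kantorovich definition of $W_2$, so the plan is to unfold the definitions and check that the two sides coincide, plus existence of an optimizer for completeness.

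\emph{Step 1 (identify the two sides).} Recall that for probability measures on the Euclidean space $\mathbb{R}^{L\times n_z}$ with finite second moments, the Kantorovich formulation gives
\begin{equation*}
W_2^2(\mathcal P_0,\mathcal P_1)=\inf_{\Gamma\in\Pi(\mathcal P_0,\mathcal P_1)}\int\|\mathbf{x}^1-\mathbf{x}^0\|_F^2\,\mathrm{d}\Gamma(\mathbf{x}^0,\mathbf{x}^1).
\end{equation*}
The ground cost is the squared Frobenius norm, i.e.\ the squared Euclidean norm after identifying $\mathbb{R}^{L\times n_z}\cong\mathbb{R}^{Ln_z}$. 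The integrand is exactly the quantity averaged in $\mathcal C_{\mathrm{rect}}(\Gamma)$ in \eqref{eq:main_rect_cost}, because the displacement of the rectified interpolation \eqref{eq:main_rectified_interp} satisfies $\frac{\mathrm{d}}{\mathrm{d}t}\mathbf{x}_s^t=\mathbf{x}_s^1-\mathbf{x}_s^0$ for every $t$. So $\mathcal C_{\mathrm{rect}}(\Gamma)$ is the Kantorovich cost of $\Gamma$, and the infima agree term by term.

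I will also note the equivalent kinetic-energy reading. Since the velocity is constant in $t$,
\begin{equation*}
\mathcal C_{\mathrm{rect}}(\Gamma)=\mathbb E_\Gamma\int_0^1\Big\|\tfrac{\mathrm{d}}{\mathrm{d}t}\mathbf{x}_s^t\Big\|_F^2\,\mathrm{d}t.
\end{equation*}
This ties the statement to the path-energy (Benamou--Brenier) picture that motivates the section, though it is not needed for the equality.

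\emph{Step 2 (finiteness and well-posedness).} For every coupling, $\|\mathbf{x}^1-\mathbf{x}^0\|_F^2\le 2\|\mathbf{x}^0\|_F^2+2\|\mathbf{x}^1\|_F^2$. Hence $\mathcal C_{\mathrm{rect}}(\Gamma)\le 2M_2(\mathcal P_0)+2M_2(\mathcal P_1)<\infty$, where $M_2$ denotes the second moment. So both sides are finite real numbers, and the independent coupling $\mathcal P_0\otimes\mathcal P_1$ shows that $\Pi$ is nonempty.

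\emph{Step 3 (attainment).} This is optional, but it strengthens the statement to a minimum, and it is the only mildly technical point. The set $\Pi(\mathcal P_0,\mathcal P_1)$ is tight, because its marginals are fixed and each is tight; it is also weakly closed, so by Prokhorov it is weakly compact. The cost $c(\mathbf{x}^0,\mathbf{x}^1)=\|\mathbf{x}^1-\mathbf{x}^0\|_F^2$ is continuous and nonnegative, so $\Gamma\mapsto\int c\,\mathrm{d}\Gamma$ is weakly lower semicontinuous: approximate $c$ from below by the bounded continuous functions $c\wedge n$ and take the supremum over $n$. A minimizing sequence therefore has a limit point that attains the infimum.

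The main obstacle is only this lower-semicontinuity and compactness argument. Everything else is definitional, so in the write-up I would state Step 1 as the core identity and cite the standard existence theorem for optimal couplings (e.g.\ Villani) for Step 3.
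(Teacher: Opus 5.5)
Your proposal is correct and follows the paper's own argument: both observe that the rectified path has constant velocity $\mathbf{x}_s^1-\mathbf{x}_s^0$, so $\mathcal C_{\mathrm{rect}}(\Gamma)$ is exactly the Kantorovich cost of $\Gamma$, and the equality then holds by the definition of $W_2^2$ (the paper phrases this through the squared path length, you through the equivalent kinetic energy). Your finiteness bound and the Prokhorov/lower-semicontinuity attainment argument are correct additions that the paper omits, since it only claims equality of the infima.
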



Proposition~\ref{prop:main_w2} characterizes the minimum
expected squared transport cost over all source--target
couplings, providing a geometric motivation for source-prior
design. ProtoFlow uses independent source--target sampling;
the benefits of its prototype prior are evaluated empirically of Figure~\ref{fig:sampling}
. Let $\mathcal P_C:=P_{\mathcal Z}$ denote ProtoFlow's
sequence-level codebook prior, and let $\mathcal P_G=\mathcal N(0,I_{Ln_z})$ denote Gaussian initialization in the same latent space.
If
\begin{equation}
W_2(\mathcal P_C,\mathcal P_1)
<
W_2(\mathcal P_G,\mathcal P_1),
\label{eq:main_closer_prior}
\end{equation}
then the codebook prior induces a smaller optimal rectified transport cost than Gaussian initialization. Under this assumption, this conditional result provides a geometric motivation for incorporating learned latent structure into the
source prior. Detailed derivation is given in Appendix~\ref{app:proto_theory}. \\
This geometric motivation is complemented by
Figure~\ref{fig:sampling}, where ProtoFlow achieves
lower empirical Wasserstein distances between generated
forecasts and observed targets, together with strong
few-step forecasting performance. These results provide
empirical support for the practical benefits of
prototype-informed initialization.

\section{Experiments}
\textbf{Benchmarks.} Our evaluation uses five widely adopted real-world benchmarks
for high-dimensional MTS forecasting:
Solar and Electricity \cite{lai2018modeling},
Traffic and Taxi \cite{salinas2019high}, and
Wikipedia \cite{gasthaus2019probabilistic}.
These datasets span energy, transportation, and web-traffic
applications, with sampling intervals ranging from
30 minutes to one day. Additional dataset details are
provided in Appendix~\ref{supp:more_dataset_details}.

\textbf{Baselines.} We compare ProtoFlow with strong generative forecasting
baselines from the diffusion and flow families. For diffusion-based forecasting, we include TimeGrad~\cite{rasul2021autoregressive}, CSDI~\cite{tashiro2021csdi}, TSDiff~\cite{kollovieh2023predict}, MG-TSD~\cite{fan2024mg}, DyDiff~\cite{guo2025dynamical}, and NsDiff~\cite{ye2025non}. We additionally consider the recent popular flow-based approachs FlowTime~\cite{el2025probabilistic} and TSFlow~\cite{kollovieh2025flow}. Detailed baseline descriptions are deferred to Appendix~\ref{sec:related_work}.


\textbf{Evaluation Metrics.} We evaluate both probabilistic and deterministic forecasting quality. For probabilistic forecasting, we report the continuous ranked probability score on the aggregated series, $\mathrm{CRPS}_{\mathrm{sum}}$ \cite{matheson1976scoring}. For deterministic accuracy, we report $\mathrm{NRMSE}_{\mathrm{sum}}$, which provides a scale-normalized comparison across datasets with different value ranges. Following standard probabilistic forecasting protocols, all results are computed from 100 generated samples.More details on the evaluation metrics are given in Appendix~\ref{supp:more_impl_details}. 

\noindent\textbf{Implementation Details.} We use a history length of 96 and prediction lengths
of $\{48,96\}$, following high-dimensional time series forecasting settings \cite{rasul2021autoregressive, tashiro2021csdi}. RevIN~\cite{kim2021reversible} is
applied except Solar.
All experiments run on a single NVIDIA A100 GPU
with 40\,GB memory, with results averaged over
three runs. More details are shown in
Appendix Tables~\ref{tab:ts_data} and ~\ref{tab:ts_data2}.


\subsection{Main Results}  


We evaluate ProtoFlow on five widely used high-dimensional
MTS benchmarks with prediction horizons of 48 and 96 steps.
As shown in Table~\ref{tab:appendix_resuls-details}, ProtoFlow
achieves strong overall performance in both probabilistic
forecasting ($\text{CRPS}_{\text{sum}}$) and point
forecasting ($\text{NRMSE}_{\text{sum}}$), obtaining
the best or joint-best results in 18 out of 20 comparisons,
including comparisons with the Gaussian-prior variant
ProtoFlow-G. Specifically,

\begin{table*}[!h]
    \vspace{-0.1in}
    \begin{center}
    \begin{threeparttable}
    \large
    \resizebox{0.95\linewidth}{!}{
    \begin{tabular}{l|cc|cc|cc|cc|cc}
    \multicolumn{1}{c}{} & \multicolumn{2}{c}{Electricity} & \multicolumn{2}{c}{Solar} & \multicolumn{2}{c}{Traffic} & \multicolumn{2}{c}{Taxi} & \multicolumn{2}{c}{Wikipedia}\\
    Horizon+Method & $\text{CRPS}_\text{sum}$ & $\text{NRMSE}_\text{sum}$  & $\text{CRPS}_\text{sum}$ & $\text{NRMSE}_\text{sum}$ &  $\text{CRPS}_\text{sum}$ & $\text{NRMSE}_\text{sum}$ &  $\text{CRPS}_\text{sum}$ & $\text{NRMSE}_\text{sum}$ &  $\text{CRPS}_\text{sum}$ & $\text{NRMSE}_\text{sum}$ \\
    \midrule
   \textbf{48 Steps} & &  &  &  &  &  &  & \\
    ~~$\bullet~$TimeGrad~\citeyearpar{rasul2021autoregressive} & 0.031 & 0.044 & 0.324 & 0.736 & 0.043 & 0.072 & 0.252 & 0.334 & 0.082 & 0.105  \\
    ~~$\bullet~$CSDI~\citeyearpar{tashiro2021csdi} & 0.029 & 0.050 & 0.328 & 0.742 & - & - & - & - & -& -  \\
    ~~$\bullet~$TSDiff~\citeyearpar{kollovieh2023predict} & 0.036 & 0.057 & 0.324 & 0.731 & 0.057 & 0.070 & 0.243 & 0.330 & 0.079 & \bf 0.091  \\
    ~~$\bullet~$MG-TSD~\citeyearpar{fan2024mg} & 0.024 & 0.038 & 0.328 & 0.737 & 0.042 & 0.067 & 0.217 & 0.327 & 0.071 & 0.096 \\
    ~~$\bullet~$FlowTime~\citeyearpar{el2025probabilistic} & 0.033 & 0.052 & 0.321 & 0.729 & 0.054 & 0.077 & 0.419 & 0.514 & 0.091 & 0.115  \\
    ~~$\bullet~$DyDiff~\citeyearpar{guo2025dynamical} & 0.027 & 0.043 & 0.323 & 0.724 & 0.043 & 0.064 & 0.196 & 0.313 & 0.070 & 0.097  \\
    ~~$\bullet~$NsDiff~\citeyearpar{ye2025non} & 0.025 & 0.040 & 0.311 & 0.703 & \underline{0.040} & \bf 0.061 & \underline{0.182} & 0.299 & 0.074 & \underline{0.094}  \\
    ~~$\bullet~$TSFlow~\citeyearpar{kollovieh2025flow} & \underline{0.015} & \underline{0.036} & \underline{0.289} & \underline{0.681} & \bf 0.038 & \underline{0.064} & 0.184 & \underline{0.294} & \underline{0.065} & 0.107  \\
    ~~$\bullet~$ProtoFlow-G &  \textcolor{gray}{0.014} &  \textcolor{gray}{0.031} & \textcolor{gray}{0.301}  & \textcolor{gray}{0.689} & \textcolor{gray}{0.035} & \textcolor{gray}{0.057}  & \textcolor{gray}{0.153}  & \textcolor{gray}{0.279}  & \textcolor{gray}{0.067} & \textcolor{gray}{0.118} \\
    \rowcolor{black!8}~~$\bullet~$\textbf{ProtoFlow (ours)} & \bf 0.013 & \bf 0.028 & \bf 0.275 & \bf 0.626 & 0.041 & 0.066 & \bf 0.150 & \bf 0.272 & \bf 0.063 & \bf 0.091 \\
    \cmidrule{1-11}
   \textbf{96 Steps} & &  &  &  &  &  &  & \\
    ~~$\bullet~$TimeGrad~\citeyearpar{rasul2021autoregressive} & 0.039 & 0.047 & 0.347 & 0.749 & 0.046 & 0.079 & 0.369 & 0.454 & 0.109 & 0.138  \\
    ~~$\bullet~$CSDI~\citeyearpar{tashiro2021csdi} & 0.047 & 0.044 & 0.330 & 0.753 & - & - & - & - & -& -  \\
    ~~$\bullet~$TSDiff~\citeyearpar{kollovieh2023predict} & 0.039 & 0.048 & 0.336 & 0.748 & 0.063 & 0.076 & 0.338 & 0.419 & 0.113 & 0.135  \\
    ~~$\bullet~$MG-TSD~\citeyearpar{fan2024mg} & 0.043 & 0.044 & 0.349 & 0.762 & 0.047 & 0.072 & 0.274 & 0.364 & \underline{0.094} & 0.126 \\
    ~~$\bullet~$FlowTime~\citeyearpar{el2025probabilistic} & 0.048 & 0.059 & 0.328 & 0.751 & 0.069 & 0.083 & 0.474 & 0.576 & 0.135 & 0.163  \\
    ~~$\bullet~$DyDiff~\citeyearpar{guo2025dynamical} & 0.036 & 0.052 & 0.334 & 0.745 & 0.056 & 0.079 & 0.247 & 0.324 & 0.101 & 0.109  \\
    ~~$\bullet~$NsDiff~\citeyearpar{ye2025non} & 0.033 & 0.047 & 0.309 & 0.714 & 0.047 & 0.069 & 0.236 & \underline{0.312} & 0.099 & \underline{0.107}  \\
    ~~$\bullet~$TSFlow~\citeyearpar{kollovieh2025flow} & \underline{0.025} & \underline{0.044} & \underline{0.299} & 
    \underline{0.703} & \underline{0.044} & \underline{0.068} & \underline{0.212} & 0.346 & 0.103 & 0.136  \\
    ~~$\bullet~$ProtoFlow-G & \textcolor{gray}{0.019} & \textcolor{gray}{0.037}  & \textcolor{gray}{0.305}  & 
    \textcolor{gray}{0.717}  & \textcolor{gray}{0.036} & \textcolor{gray}{0.058} & \textcolor{gray}{0.159}  & \textcolor{gray}{0.287} & \textcolor{gray}{0.096}  & \textcolor{gray}{0.114} \\
    \rowcolor{black!8}~~$\bullet~$\textbf{ProtoFlow (ours)} & \bf 0.015 & \bf 0.022 & \bf 0.284 & \bf 0.638 & \bf 0.034 & \bf 0.054 & \bf 0.157 & \bf 0.281 & \bf 0.092 & \bf 0.101 \\
    \end{tabular}
    }  
    \caption{The detailed results of ProtoFlow  on the 48 and 96 prediction horizons. ProtoFlow-G: the variant of ProtoFlow that uses a standard Gaussian prior. - marks out-of-memory failures, Best and second-best forecasting results are bolded and underlined, respectively. }
    \label{tab:appendix_resuls-details}
    \end{threeparttable}
    \end{center}
\end{table*}
\textbf{1)} Compared with flow-based baselines, ProtoFlow reduces
$\text{CRPS}_{\text{sum}}$ and
$\text{NRMSE}_{\text{sum}}$ by an average of
13.6\% and 17.4\% over TSFlow, and by 42.6\% and 34.4\%
over FlowTime, respectively. Compared with NsDiff,
a strong diffusion-based baseline, ProtoFlow achieves
average reductions of 22.0\% and 14.6\%, respectively.
All percentages are calculated by averaging the relative
error reductions across the five datasets and two
prediction horizons. These results demonstrate the
effectiveness of ProtoFlow across diverse high-dimensional
forecasting settings. \textbf{2)} ProtoFlow achieves strong sampling efficiency compared with competitive flow-based and diffusion-based methods. Specifically, it uses only 3 ODE sampling steps on all datasets except Wikipedia, which requires 5 steps. In comparison, TSFlow, a strong flow-based baseline, and NsDiff, a strong diffusion-based baseline, use 16 and 20 sampling steps, respectively, following the settings in their original papers. ProtoFlow achieves better overall forecasting accuracy than both methods with substantially fewer sampling steps, demonstrating a favorable balance between prediction quality and sampling efficiency. \textbf{3)}
To examine the contribution of the prototype prior, we compare ProtoFlow with ProtoFlow-G, which uses a standard Gaussian prior during both training and inference. However, ProtoFlow converges in fewer training iterations (Figure~\ref{fig:efficiency} (a)), supporting the benefit of the prototype prior in accelerating flow matching training convergence.

\subsection{Effects of latent representation and generation strategy.} 
We analyze the effects of latent representation and generation strategy in Table~\ref{tab:representation_results_summary}.

\textbf{1)} VQ-based variants outperform their VAE-based counterparts on both metrics across all five datasets under both AR and FM generation. Specifically, replacing VAE with VQ reduces the dataset-averaged $\text{CRPS}_{\text{sum}}$ and $\text{NRMSE}_{\text{sum}}$ by 75.0\% and 73.3\% under AR generation, and by 57.3\% and 51.9\% under Gaussian-prior FM, respectively. These consistent improvements support the effectiveness of vector-quantized representations across different generation strategies. \textbf{2)} With VQ representations, ProtoFlow-G consistently outperforms VQ + AR across all five datasets, reducing the dataset-averaged  $\text{CRPS}_{\text{sum}}$ and $\text{NRMSE}_{\text{sum}}$ by 19.4\% and 14.2\%, respectively. By modeling the future latent sequence jointly, non-autoregressive FM avoids the discrepancy between conditioning on ground-truth preceding tokens during teacher-forced training and on generated tokens during AR inference. These results support the advantage of non-autoregressive FM in this setting. \textbf{3)} Overall, these results support the combination of
vector-quantized representations and non-autoregressive
flow matching. ProtoFlow further improves upon ProtoFlow-G
on both metrics across all five datasets. Together with the
earlier stabilization of its normalized training loss
in Figure~\ref{fig:efficiency}(a), these results support
the benefit of incorporating learned prototype information
into the FM source distribution.

\begin{table*}[!h]
  \centering
  
\resizebox{0.95\textwidth}{!}{
\begin{tabular}{lcccccccccc|cc}
\toprule
\multirow{2}[2]{*}{\centering Method} &  \multicolumn{2}{c}{\textbf{Electricity}} & \multicolumn{2}{c}{\textbf{Solar}} &  \multicolumn{2}{c}{\textbf{Traffic}} & \multicolumn{2}{c}{\textbf{Taxi}} & \multicolumn{2}{c}{\textbf{Wikipedia}} & \multicolumn{2}{c}{\textbf{Avg.}} \\
\cmidrule(lr){2-3}  \cmidrule(lr){4-5}  \cmidrule(lr){6-7}  \cmidrule(lr){8-9}  \cmidrule(lr){10-11} \cmidrule(lr){12-13}  
& C-S & NM-S & C-S & NM-S & C-S & NM-S & C-S & NM-S & C-S & NM-S & C-S & NM-S\\
\midrule


VAE + AR & 0.421 & 0.544  & 1.154  & 2.618  & 1.005 & 1.511 & 0.337 & 0.448 & 0.130 & 0.169 & 0.609 & 1.058\\

VAE + FM & 0.129 & 0.213  & 0.643 & 1.324  & 0.323 & 0.564 & 0.227 & 0.293 & 0.117 & 0.128 &0.288 & 0.504\\

VQ + AR & 0.037 & 0.054  & 0.347  & 0.833 & 0.066 & 0.084 & 0.205 & 0.322 & 0.108 & 0.121 & 0.153 & 0.283 \\

ProtoFlow-G & 0.019 & 0.037  & 0.305  &  0.717  & 0.036 & 0.058 & 0.159 & 0.287 & 0.096 & 0.114 & \underline{0.123} & \underline{0.243}   \\






ProtoFlow & \bf 0.015 & \bf 0.022 & \bf 0.284 & \bf 0.638 & \bf 0.034 & \bf 0.054 & \bf 0.157 & \bf 0.281 & \bf 0.092 & \bf 0.101 & \bf 0.116 & \bf 0.219\\


\bottomrule

\end{tabular}%
}
\vspace{-0.1in}
\caption{
  The comparison results of 96 prediction horizons with baselines regarding $\text{CRPS}_{\text{sum}}$ (C-S) and $\text{NRMSE}_{\text{sum}}$ (NM-S). AR: Auto-regressive Transformer and FM: Flow Matching .
}
\label{tab:representation_results_summary}%
\vskip -0.1in
\end{table*}%

\vspace{-0.08in}
\subsection{Effects of the prototype prior on FM training/inference efficiency.} We smooth training losses with a five-epoch moving average and normalize each curve as $\widetilde{L}_e =
 (\bar{L}_e - L_{\mathrm{end}}) /
 (\bar{L}_1 - L_{\mathrm{end}})$, where $\bar{L}_e$ denotes the smoothed loss at epoch $e$, and $L_{\mathrm{end}}$ is the median of the smoothed losses over the final $5\%$ of training epochs for the corresponding run. Figure~\ref{fig:efficiency}(a) shows that ProtoFlow approaches its own terminal loss level in fewer epochs than ProtoFlow-G and TSFlow across all three datasets. The comparison with ProtoFlow-G, which shares the same representation and generation architecture, supports earlier FM training stabilization with prototype initialization. These curves describe relative loss reduction toward each run's observed endpoint and do not establish equivalent absolute optimization errors or predictive accuracy across methods.

Figure~\ref{fig:efficiency} (b) reports inference speedups of $3.66$--$3.90\times$ over TSFlow. ProtoFlow combines 3-step ODE sampling with non-autoregressive latent generation, avoiding autoregressive dependencies, whereas TSFlow uses 16 steps following its original configuration. These latency gains accompany a larger training-memory footprint ($4.08$--$7.49\times$),associated with the additional components used for latent representation learning and generation. Meanwhile, ProtoFlow outperforms TSFlow on both forecasting metrics across all three datasets and generally improves upon ProtoFlow-G (Figures~\ref{fig:efficiency} (c)--(d)).

\begin{figure*}[t!]
\vspace{-0.1in}
  \centering
   \includegraphics[width=1.0\linewidth]{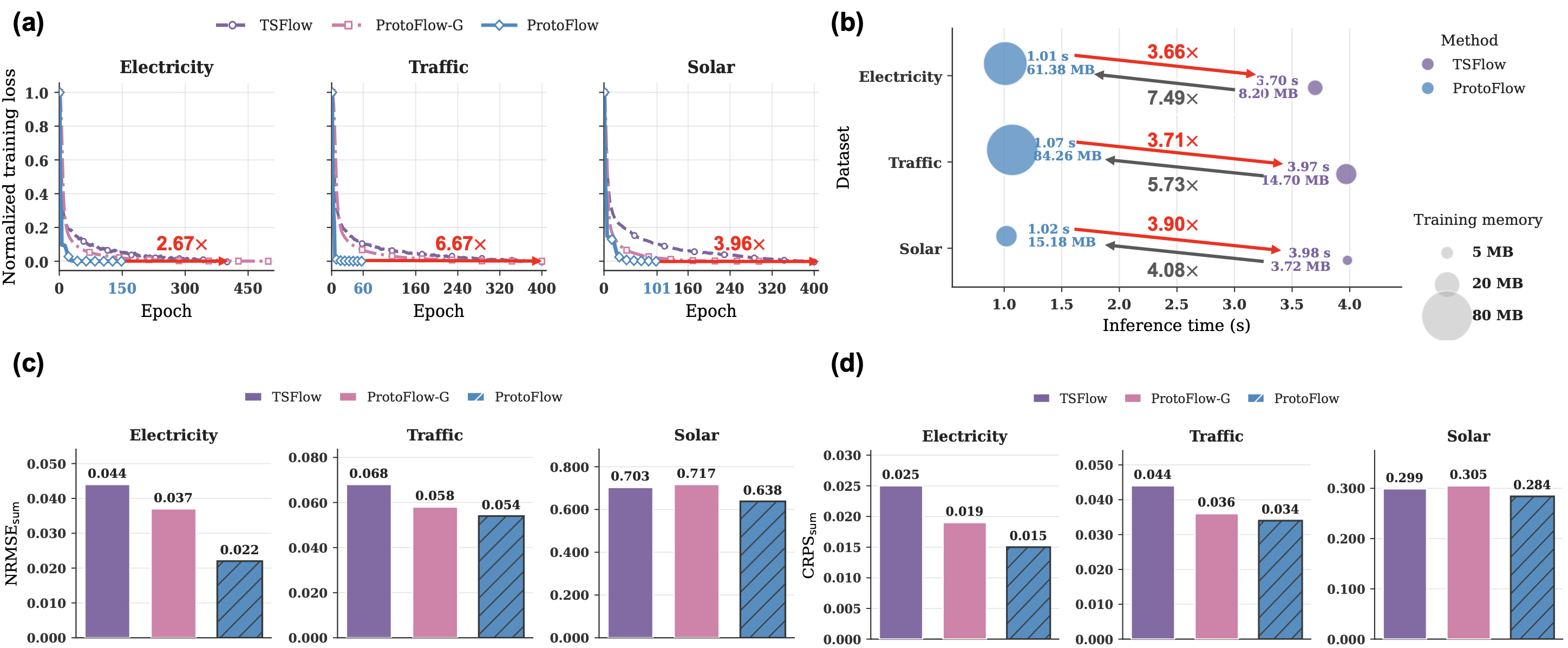}
   \vspace{-0.16in}
   \caption{Comparison of training convergence (\textbf{a}), inference efficiency, 
training memory (\textbf{b}), and forecasting performance (\textbf{c-d}) on Electricity, Traffic,
and Solar with a prediction length of 96.}
   \label{fig:efficiency}
\end{figure*}

\begin{table*}[h]
  \centering
\resizebox{0.95\textwidth}{!}{
\begin{tabular}{lcccccccccc|cc}
\toprule
\multirow{2}[2]{*}{\centering Method} &  \multicolumn{2}{c}{\textbf{Electricity}} & \multicolumn{2}{c}{\textbf{Solar}} &  \multicolumn{2}{c}{\textbf{Traffic}} & \multicolumn{2}{c}{\textbf{Taxi}} & \multicolumn{2}{c}{\textbf{Wikipedia}} & \multicolumn{2}{c}{\textbf{Avg.}} \\
\cmidrule(lr){2-3}  \cmidrule(lr){4-5}  \cmidrule(lr){6-7}  \cmidrule(lr){8-9}  \cmidrule(lr){10-11} \cmidrule(lr){12-13}  
& C-S & NM-S & C-S & NM-S & C-S & NM-S & C-S & NM-S & C-S & NM-S & C-S & NM-S\\
\midrule

ProtoFlow-G & 0.019 & 0.037  & \underline{0.305}  &  \underline{0.717}  &  \underline{0.036} & 0.058 & 0.159 & 0.287 & \underline{0.096} & 0.114 & \underline{0.123} & \underline{0.243}   \\

ProtoFlow-uniform  & \underline{0.018} & \underline{0.030}  & 0.356 & 0.769 & \underline{0.036} & \underline{0.056} & \bf 0.157 & \underline{0.284} & 0.099 & \underline{0.108} & 0.133 & 0.249 \\

ProtoFlow & \bf 0.015 & \bf 0.022 & \bf 0.284 & \bf 0.638 & \bf 0.034 & \bf 0.054 & \bf 0.157 & \bf 0.281 & \bf 0.092 & \bf 0.101 & \bf 0.116 & \bf 0.219\\

\bottomrule
\end{tabular}%
}
\vspace{-0.1in}
\caption{The forecasting performance with frequency-weighted
prototype (ProtoFlow), uniform-weight prototype (ProtoFlow-uniform), and isotropic Gaussian
priors (ProtoFlow-G) at a prediction length of 96.
}
\label{tab:prototype_frequency}%
\vskip -0.1in
\end{table*}%


\subsection{Effects of codebook size and prototype frequency information.} The prototype prior provides a data-informed initialization
for flow matching through learned latent structure.
We examine codebook size, which shapes latent quantization
and the mixture prior, and prototype frequency information,
which determines probability allocation across prototypes.
Specifically, we vary $K\in\{64,128,256,512\}$ with local
covariance rank $r=2$ fixed across datasets, and compare
frequency-based mixing weights with uniform weights.
Figure~\ref{fig:codebook_size}, Table~\ref{tab:prototype_frequency} and~\ref{tab:results_codereset_main} report their respective
effects on forecasting performance.

\begin{figure*}[!h]
\vspace{-0.1in}
  \centering
   \includegraphics[width=0.95\linewidth]{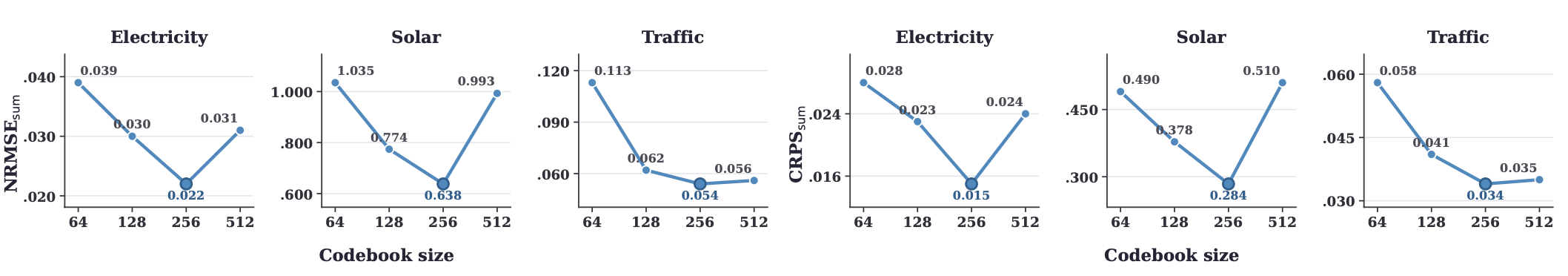}
   \vspace{-0.12in}
   \caption{Effects of codebook size on ProtoFlow at the 96-step prediction length.}
   \label{fig:codebook_size}
\end{figure*}



Figure~\ref{fig:codebook_size} shows that both metrics
improve from $K=64$ to $256$, which yields the best
observed results on all three datasets. At $K=512$,
errors rise substantially on Electricity and Solar
but only slightly on Traffic. Larger codebooks may
leave fewer samples per prototype for estimating local
prior statistics, suggesting a balance between
representational capacity and reliable prior estimation. Then, we replace the mixing weights estimated from training
statistics with $\pi_k=1/K$ in Eq.~\eqref{eq:proto_prior},
yielding
$p_{\mathcal{C}}^{\mathrm{uniform}}(\mathbf{x})
=\frac{1}{K}\sum_{k=1}^{K}
p_k(\mathbf{x};\mu_k,B_k,\sigma_k)$. Table~\ref{tab:prototype_frequency} compares ProtoFlow-G,
ProtoFlow-uniform, and ProtoFlow using respective ODE step
counts of $\{10,50,10,20,10\}$, $\{5,50,5,10,10\}$, and
$\{3,3,3,3,5\}$, following the dataset order in the table.
ProtoFlow achieves the best or joint-best performance
on both metrics across all five datasets while using
fewer steps. Although uniform weighting remains competitive
in several settings, its larger sampling budgets offer
no overall accuracy advantage. These findings support
combining learned prototype structure with empirical
frequency information for accurate few-step forecasting.

\begin{table}[htp]
  \centering
\resizebox{0.90\linewidth}{!}{
\begin{tabular}{lccccccccc}
\toprule
\multirow{2}[2]{*}{} &  \multicolumn{3}{c}{\textbf{Electricity}} & \multicolumn{3}{c}{\textbf{Solar}} &  \multicolumn{3}{c}{\textbf{Traffic}} \\
\cmidrule(lr){2-4}  \cmidrule(lr){5-7}  \cmidrule(lr){8-10}  
& Frequency & Uniform & Stage1 MSE & Frequency & Uniform & Stage1 MSE & Frequency & Uniform & Stage1 MSE \\
\midrule
{\bf Coodebook size} &  &   &  &  &  &  \\
\hspace{0.5em} 64
& 0.039  & 0.044 &\cellcolor{red!15}7.76e-4 & 1.035 & 0.962 & \cellcolor{red!15}7.70e-4 & 0.113 & 0.105 & \cellcolor{red!15}5.82e-4 \\
\hspace{0.5em} 128
& 0.030 & 0.032 & 7.13e-4 & 0.774 & 0.761 & 6.75e-4 & 0.062 & 0.066 & 5.23e-4\\
\hspace{0.5em} 256
& 0.022 & 0.030 & \cellcolor{yellow!15}6.02e-4 & 0.638 & 0.769 &  \cellcolor{yellow!15}6.13e-4 & 0.054 & 0.056 & \cellcolor{yellow!15}4.78e-4\\
\hspace{0.5em} 512
& 0.031 & 0.033 & \cellcolor{yellow!15}6.01e-4 & 0.993 & 0.758 &  \cellcolor{yellow!15}6.18e-4 & 0.056 & 0.057 & \cellcolor{yellow!15}4.69e-4\\

\bottomrule
\end{tabular}%
}
\vspace{-0.1in}

\caption{Effects of codebook size on Stage 1 reconstruction MSE and $\text{NRMSE}_{\text{sum}}$ under frequency and uniform weighting at a 96-step prediction length.}
\label{tab:results_codereset_main}%
\vskip -0.15in
\end{table}%
Table~\ref{tab:results_codereset_main} shows that $K=64$ yields the highest reconstruction errors (\textcolor{red!15}{red} cells) and forecasting errors under both weighting schemes across all three datasets, suggesting that insufficient representational capacity can limit forecasting quality. Increasing $K$ from $256$ to $512$ provides little additional reconstruction benefit (\textcolor{yellow!100}{yellow} cells), yet forecasting errors increase under frequency weighting, particularly on Solar, and decrease slightly under uniform weighting. One possible explanation is that additional prototypes subdivide existing latent patterns without substantial representational gains, while the resulting probability allocation may make conditional transport more difficult to learn. The contrasting trends under the two weighting schemes suggest an interaction between codebook capacity and source probability allocation. Given the available training data and a fixed ODE sampling budget, these design choices should therefore be considered jointly, as improved reconstruction alone does not guarantee better forecasting.

\vspace{-0.05in}
\subsection{Effects of the prototype prior on transport and sampling-step sensitivity}
Figure~\ref{fig:sampling} examines both the transport geometry and sampling-step sensitivity of ProtoFlow and ProtoFlow-G across six ODE steps on three benchmarks with a prediction horizon of 96. ProtoFlow consistently exhibits a smaller $W_2$ distance than ProtoFlow-G across all sampling budgets, indicating that the prototype-based initialization induces a transport configuration that is better aligned with the target distribution. This observation is consistent with the theoretical motivation in Section~\ref{sec:proto_theory}, where a source prior closer to the target distribution corresponds to a smaller optimal rectified transport cost. ProtoFlow also achieves its best or near-best $\text{CRPS}_{\text{sum}}$ with only a few ODE steps and remains relatively stable as the sampling budget increases.
\begin{figure*}[h]
\vspace{-0.1in}
  \centering
   \includegraphics[width=0.65\linewidth]{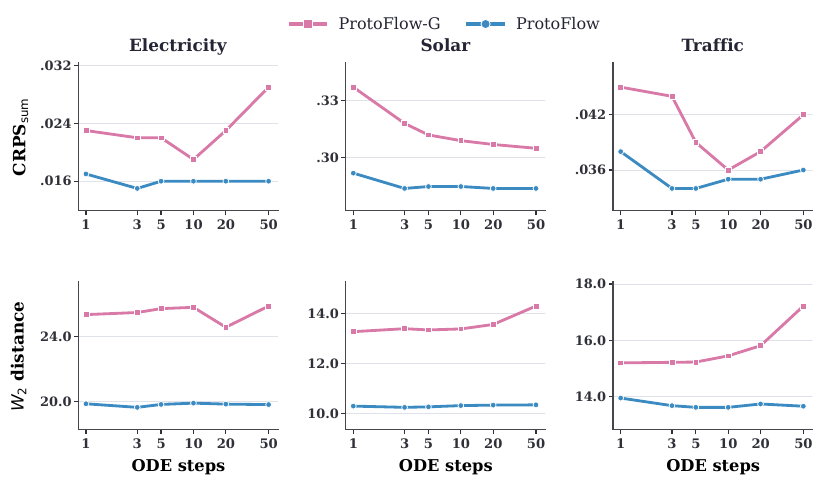}
   \vspace{-0.12in}
   \caption{Effect of ODE sampling steps on 96-step probabilistic forecasting performance and $W_2$ transport distance for ProtoFlow and ProtoFlow-G.}
   \label{fig:sampling}
\end{figure*}

\vspace{-0.18in}
\section{Conclusion}
In this paper, we propose ProtoFlow, a prototype-prior
flow matching framework that integrates vector-quantized
representations with non-autoregressive transport for
high-dimensional multivariate time series forecasting.
By constructing the source distribution from learned
prototypes and their local statistics, ProtoFlow achieves
strong forecasting performance with few ODE steps,
alongside earlier stabilization of flow matching training
and reduced sensitivity to the sampling budget. Beyond these empirical gains, our findings highlight
a broader role for learned codebooks: their structure
can inform both the representation of time series and
the initialization of generative transport.
This perspective motivates closer integration of
representation learning and source-prior design as
a promising direction for efficient generative 
time series forecasting.

\section*{AI use statement}


Generative AI tools assisted with identifying relevant
literature, drafting and refining manuscript text,
LaTeX formatting, providing feedback on experimental
design, and supporting code development.
The authors reviewed all AI-assisted content,
checked scientific claims against primary sources
and experimental evidence, and retain full responsibility
for the accuracy and integrity of the final manuscript.






\section*{Reproducibility Statement}
Section~3 describes the vector-quantized tokenizer,
prototype-prior construction, training objectives,
and inference procedure. The theoretical motivation
is presented in Section~\ref{sec:proto_theory}, with
further derivations in Appendix~\ref{app:proto_theory}.
Section~4 specifies the benchmarks, baselines,
evaluation settings, and ablation studies.
Additional dataset information and metric definitions
are provided in Appendices~\ref{supp:more_dataset_details}
and~\ref{supp:more_impl_details}, respectively.
Stage-specific model configurations and training
hyperparameters are reported in
Tables~\ref{tab:ts_data} and~\ref{tab:ts_data2}.
Experiments use a single NVIDIA A100 GPU with
40\,GB memory, and results are averaged over three
runs, with forecasting metrics computed from
100 generated samples per prediction window. The code will be made publicly available upon acceptance.


\bibliography{iclr2027_conference}

@inproceedings{huang2024generative,
  title={Generative learning for financial time series with irregular and scale-invariant patterns},
  author={Huang, Hongbin and Chen, Minghua and Qiao, Xiao},
  booktitle={The Twelfth International Conference on Learning Representations},
  year={2024}
}

@article{shao2022decoupled,
  title={Decoupled dynamic spatial-temporal graph neural network for traffic forecasting},
  author={Shao, Zezhi and Zhang, Zhao and Wei, Wei and Wang, Fei and Xu, Yongjun and Cao, Xin and Jensen, Christian S},
  journal={arXiv preprint arXiv:2206.09112},
  year={2022}
}

@inproceedings{jiang2023pdformer,
  title={Pdformer: Propagation delay-aware dynamic long-range transformer for traffic flow prediction},
  author={Jiang, Jiawei and Han, Chengkai and Zhao, Wayne Xin and Wang, Jingyuan},
  booktitle={Proceedings of the AAAI conference on artificial intelligence},
  volume={37},
  number={4},
  pages={4365--4373},
  year={2023}
}

@article{wu2020adversarial,
  title={Adversarial sparse transformer for time series forecasting},
  author={Wu, Sifan and Xiao, Xi and Ding, Qianggang and Zhao, Peilin and Wei, Ying and Huang, Junzhou},
  journal={Advances in neural information processing systems},
  volume={33},
  pages={17105--17115},
  year={2020}
}

@article{chen2024sdformer,
  title={Sdformer: Similarity-driven discrete transformer for time series generation},
  author={Chen, Zhicheng and SHIBO, FENG and Zhang, Zhong and Xiao, Xi and Gao, Xingyu and Zhao, Peilin},
  journal={Advances in Neural Information Processing Systems},
  volume={37},
  pages={132179--132207},
  year={2024}
}

@article{wu2023interpretable,
  title={Interpretable weather forecasting for worldwide stations with a unified deep model},
  author={Wu, Haixu and Zhou, Hang and Long, Mingsheng and Wang, Jianmin},
  journal={Nature Machine Intelligence},
  volume={5},
  number={6},
  pages={602--611},
  year={2023},
  publisher={Nature Publishing Group UK London}
}

@article{dinh2016density,
  title={Density estimation using real nvp},
  author={Dinh, Laurent and Sohl-Dickstein, Jascha and Bengio, Samy},
  journal={arXiv preprint arXiv:1605.08803},
  year={2016}
}

@article{song2020score,
  title={Score-based generative modeling through stochastic differential equations},
  author={Song, Yang and Sohl-Dickstein, Jascha and Kingma, Diederik P and Kumar, Abhishek and Ermon, Stefano and Poole, Ben},
  journal={arXiv preprint arXiv:2011.13456},
  year={2020}
}

@article{kingma2019introduction,
  title={An introduction to variational autoencoders},
  author={Kingma, Diederik P and Welling, Max and others},
  journal={Foundations and Trends{\textregistered} in Machine Learning},
  volume={12},
  number={4},
  pages={307--392},
  year={2019},
  publisher={Now Publishers, Inc.}
}

@inproceedings{sohl2015deep,
  title={Deep unsupervised learning using nonequilibrium thermodynamics},
  author={Sohl-Dickstein, Jascha and Weiss, Eric and Maheswaranathan, Niru and Ganguli, Surya},
  booktitle={International conference on machine learning},
  pages={2256--2265},
  year={2015},
  organization={pmlr}
}

@article{wang2024timedart,
  title={Timedart: A diffusion autoregressive transformer for self-supervised time series representation},
  author={Wang, Daoyu and Cheng, Mingyue and Liu, Zhiding and Liu, Qi},
  journal={arXiv preprint arXiv:2410.05711},
  year={2024}
}

@inproceedings{shen2023non,
  title={Non-autoregressive conditional diffusion models for time series prediction},
  author={Shen, Lifeng and Kwok, James},
  booktitle={International Conference on Machine Learning},
  pages={31016--31029},
  year={2023},
  organization={PMLR}
}

@article{cao2024timedit,
  title={Timedit: General-purpose diffusion transformers for time series foundation model},
  author={Cao, Defu and Ye, Wen and Zhang, Yizhou and Liu, Yan},
  journal={arXiv preprint arXiv:2409.02322},
  year={2024}
}

@article{ruan2025vision,
  title={Vision-Enhanced Time Series Forecasting via Latent Diffusion Models},
  author={Ruan, Weilin and Zhong, Siru and Wen, Haomin and Liang, Yuxuan},
  journal={arXiv preprint arXiv:2502.14887},
  year={2025}
}

@article{talukder2024totem,
  title={Totem: Tokenized time series embeddings for general time series analysis},
  author={Talukder, Sabera and Yue, Yisong and Gkioxari, Georgia},
  journal={arXiv preprint arXiv:2402.16412},
  year={2024}
}

@article{feng2025hdt,
  title={Hdt: Hierarchical discrete transformer for multivariate time series forecasting},
  author={Feng, Shibo and Zhao, Peilin and Liu, Liu and Wu, Pengcheng and Shen, Zhiqi},
  journal={arXiv preprint arXiv:2502.08302},
  year={2025}
}

@article{bengio2015scheduled,
  title={Scheduled sampling for sequence prediction with recurrent neural networks},
  author={Bengio, Samy and Vinyals, Oriol and Jaitly, Navdeep and Shazeer, Noam},
  journal={Advances in neural information processing systems},
  volume={28},
  year={2015}
}

@article{schmidt2019generalization,
  title={Generalization in generation: A closer look at exposure bias},
  author={Schmidt, Florian},
  journal={arXiv preprint arXiv:1910.00292},
  year={2019}
}

@article{zhang2024trajectory,
  title={Trajectory flow matching with applications to clinical time series modelling},
  author={Zhang, Xi Nicole and Pu, Yuan and Kawamura, Yuki and Loza, Andrew and Bengio, Yoshua and Shung, Dennis and Tong, Alexander},
  journal={Advances in Neural Information Processing Systems},
  volume={37},
  pages={107198--107224},
  year={2024}
}

@article{kollovieh2024flow,
  title={Flow matching with gaussian process priors for probabilistic time series forecasting},
  author={Kollovieh, Marcel and Lienen, Marten and L{\"u}dke, David and Schwinn, Leo and G{\"u}nnemann, Stephan},
  journal={arXiv preprint arXiv:2410.03024},
  year={2024}
}

@article{lee2023vector,
  title={Vector quantized time series generation with a bidirectional prior model},
  author={Lee, Daesoo and Malacarne, Sara and Aune, Erlend},
  journal={arXiv preprint arXiv:2303.04743},
  year={2023}
}

@article{van2017neural,
  title={Neural discrete representation learning},
  author={Van Den Oord, Aaron and Vinyals, Oriol and others},
  journal={Advances in neural information processing systems},
  volume={30},
  year={2017}
}

@article{liu2022flow,
  title={Flow straight and fast: Learning to generate and transfer data with rectified flow},
  author={Liu, Xingchao and Gong, Chengyue and Liu, Qiang},
  journal={arXiv preprint arXiv:2209.03003},
  year={2022}
}

@article{lipman2022flow,
  title={Flow matching for generative modeling},
  author={Lipman, Yaron and Chen, Ricky TQ and Ben-Hamu, Heli and Nickel, Maximilian and Le, Matt},
  journal={arXiv preprint arXiv:2210.02747},
  year={2022}
}

@inproceedings{peebles2023scalable,
  title={Scalable diffusion models with transformers},
  author={Peebles, William and Xie, Saining},
  booktitle={Proceedings of the IEEE/CVF international conference on computer vision},
  pages={4195--4205},
  year={2023}
}

@inproceedings{dai2019transformer,
  title={Transformer-xl: Attentive language models beyond a fixed-length context},
  author={Dai, Zihang and Yang, Zhilin and Yang, Yiming and Carbonell, Jaime G and Le, Quoc and Salakhutdinov, Ruslan},
  booktitle={Proceedings of the 57th annual meeting of the association for computational linguistics},
  pages={2978--2988},
  year={2019}
}

@inproceedings{lai2018modeling,
  title={Modeling long-and short-term temporal patterns with deep neural networks},
  author={Lai, Guokun and Chang, Wei-Cheng and Yang, Yiming and Liu, Hanxiao},
  booktitle={The 41st international ACM SIGIR conference on research \& development in information retrieval},
  pages={95--104},
  year={2018}
}

@article{salinas2019high,
  title={High-dimensional multivariate forecasting with low-rank gaussian copula processes},
  author={Salinas, David and Bohlke-Schneider, Michael and Callot, Laurent and Medico, Roberto and Gasthaus, Jan},
  journal={Advances in neural information processing systems},
  volume={32},
  year={2019}
}

@inproceedings{gasthaus2019probabilistic,
  title={Probabilistic forecasting with spline quantile function RNNs},
  author={Gasthaus, Jan and Benidis, Konstantinos and Wang, Yuyang and Rangapuram, Syama Sundar and Salinas, David and Flunkert, Valentin and Januschowski, Tim},
  booktitle={The 22nd international conference on artificial intelligence and statistics},
  pages={1901--1910},
  year={2019},
  organization={PMLR}
}

@inproceedings{rasul2021autoregressive,
  title={Autoregressive denoising diffusion models for multivariate probabilistic time series forecasting},
  author={Rasul, Kashif and Seward, Calvin and Schuster, Ingmar and Vollgraf, Roland},
  booktitle={International conference on machine learning},
  pages={8857--8868},
  year={2021},
  organization={PMLR}
}

@article{tashiro2021csdi,
  title={Csdi: Conditional score-based diffusion models for probabilistic time series imputation},
  author={Tashiro, Yusuke and Song, Jiaming and Song, Yang and Ermon, Stefano},
  journal={Advances in neural information processing systems},
  volume={34},
  pages={24804--24816},
  year={2021}
}

@article{kollovieh2023predict,
  title={Predict, refine, synthesize: Self-guiding diffusion models for probabilistic time series forecasting},
  author={Kollovieh, Marcel and Ansari, Abdul Fatir and Bohlke-Schneider, Michael and Zschiegner, Jasper and Wang, Hao and Wang, Yuyang Bernie},
  journal={Advances in Neural Information Processing Systems},
  volume={36},
  pages={28341--28364},
  year={2023}
}

@article{fan2024mg,
  title={Mg-tsd: Multi-granularity time series diffusion models with guided learning process},
  author={Fan, Xinyao and Wu, Yueying and Xu, Chang and Huang, Yuhao and Liu, Weiqing and Bian, Jiang},
  journal={arXiv preprint arXiv:2403.05751},
  year={2024}
}

@article{guo2025dynamical,
  title={Dynamical diffusion: Learning temporal dynamics with diffusion models},
  author={Guo, Xingzhuo and Zhang, Yu and Chen, Baixu and Xu, Haoran and Wang, Jianmin and Long, Mingsheng},
  journal={arXiv preprint arXiv:2503.00951},
  year={2025}
}

@article{ye2025non,
  title={Non-stationary Diffusion For Probabilistic Time Series Forecasting},
  author={Ye, Weiwei and Xu, Zhuopeng and Gui, Ning},
  journal={arXiv preprint arXiv:2505.04278},
  year={2025}
}

@article{el2025probabilistic,
  title={Probabilistic Forecasting via Autoregressive Flow Matching},
  author={El-Gazzar, Ahmed and van Gerven, Marcel},
  journal={arXiv preprint arXiv:2503.10375},
  year={2025}
}

@article{matheson1976scoring,
  title={Scoring rules for continuous probability distributions},
  author={Matheson, James E and Winkler, Robert L},
  journal={Management science},
  volume={22},
  number={10},
  pages={1087--1096},
  year={1976},
  publisher={INFORMS}
}

@inproceedings{kim2021reversible,
  title={Reversible instance normalization for accurate time-series forecasting against distribution shift},
  author={Kim, Taesung and Kim, Jinhee and Tae, Yunwon and Park, Cheonbok and Choi, Jang-Ho and Choo, Jaegul},
  booktitle={International conference on learning representations},
  year={2021}
}

@article{meijer2024rise,
  title={The rise of diffusion models in time-series forecasting},
  author={Meijer, Caspar and Chen, Lydia Y},
  journal={arXiv preprint arXiv:2401.03006},
  year={2024}
}

@article{tong2023improving,
  title={Improving and generalizing flow-based generative models with minibatch optimal transport},
  author={Tong, Alexander and Fatras, Kilian and Malkin, Nikolay and Huguet, Guillaume and Zhang, Yanlei and Rector-Brooks, Jarrid and Wolf, Guy and Bengio, Yoshua},
  journal={arXiv preprint arXiv:2302.00482},
  year={2023}
}

@article{alexandrov2020gluonts,
  title={Gluonts: Probabilistic and neural time series modeling in python},
  author={Alexandrov, Alexander and Benidis, Konstantinos and Bohlke-Schneider, Michael and Flunkert, Valentin and Gasthaus, Jan and Januschowski, Tim and Maddix, Danielle C and Rangapuram, Syama and Salinas, David and Schulz, Jasper and others},
  journal={Journal of Machine Learning Research},
  volume={21},
  number={116},
  pages={1--6},
  year={2020}
}

@article{paszke2019pytorch,
  title={Pytorch: An imperative style, high-performance deep learning library},
  author={Paszke, Adam and Gross, Sam and Massa, Francisco and Lerer, Adam and Bradbury, James and Chanan, Gregory and Killeen, Trevor and Lin, Zeming and Gimelshein, Natalia and Antiga, Luca and others},
  journal={Advances in neural information processing systems},
  volume={32},
  year={2019}
}

@inproceedings{rasul2024vq,
  title={Vq-tr: Vector quantized attention for time series forecasting},
  author={Rasul, Kashif and Bennett, Andrew and Vicente, Pablo and Gupta, Umang and Ghonia, Hena and Schneider, Anderson and Nevmyvaka, Yuriy},
  booktitle={International Conference on Learning Representations},
  volume={2024},
  pages={47137--47158},
  year={2024}
}

@article{feng2026msdformer,
  title={MSDformer: Multi-scale Discrete Transformer For Time Series Generation},
  author={Feng, Shibo and Chen, Zhicheng and Xiao, Xi and Zhang, Zhong and Li, Qing and Gao, Xingyu and Zhao, Peilin},
  journal={IEEE Transactions on Pattern Analysis and Machine Intelligence},
  year={2026},
  publisher={IEEE}
}

@article{li2026sdflow,
  title={SDFlow: Similarity-Driven Flow Matching for Time Series Generation},
  author={Li, Wei and Feng, Shibo and Wu, Pengcheng and Gao, Xingyu and Wu, Min and Zhao, Peilin},
  journal={arXiv preprint arXiv:2605.05736},
  year={2026}
}

@inproceedings{kollovieh2025flow,
  title={Flow matching with gaussian process priors for probabilistic time series forecasting},
  author={Kollovieh, Marcel and Lienen, Marten and L{\"u}dke, David and Schwinn, Leo and G{\"u}nnemann, Stephan},
  booktitle={International Conference on Learning Representations},
  volume={2025},
  pages={95435--95457},
  year={2025}
}
\bibliographystyle{iclr2027_conference}
\appendix
\newpage
\begin{center}
    {\LARGE \bf Supplementary Materials for ProtoFlow}\\[4pt]
\end{center}
\section{Additional Theoretical Analysis of Codebook-Prior Initialization}
\label{app:proto_theory}

In this section, we provide the formal derivation underlying the theoretical motivation for codebook-prior initialization in ProtoFlow.

\subsection{Optimal Rectified Transport Cost and Wasserstein Distance}

Let $\mathcal P_1$ denote the target latent distribution of $\mathbf{x}_s^1$, and let $\mathcal P_0$ be an arbitrary source prior defined on the same latent space $\mathbb{R}^{L\times n_z}$. Assume that both $\mathcal P_0$ and $\mathcal P_1$ have finite second moments. For any coupling
\[
\Gamma \in \Pi(\mathcal P_0,\mathcal P_1),
\]
where $\Pi(\mathcal P_0,\mathcal P_1)$ denotes the set of all joint distributions with marginals $\mathcal P_0$ and $\mathcal P_1$, define the rectified interpolation path
\begin{equation}
\mathbf{x}_s^t = (1-t)\mathbf{x}_s^0 + t \mathbf{x}_s^1,
\qquad t\in[0,1].
\label{eq:app_rectified_interp}
\end{equation}
The expected squared rectified transport cost under coupling $\Gamma$ is defined as
\begin{equation}
\mathcal C_{\mathrm{rect}}(\Gamma)
:=
\mathbb E_{(\mathbf{x}_s^0,\mathbf{x}_s^1)\sim\Gamma}
\left[
\|\mathbf{x}_s^1-\mathbf{x}_s^0\|_F^2
\right].
\label{eq:app_rect_cost}
\end{equation}

We further define the minimum expected squared rectified transport cost from $\mathcal P_0$ to $\mathcal P_1$ as
\begin{equation}
\mathcal C_{\mathrm{opt}}(\mathcal P_0,\mathcal P_1)
:=
\inf_{\Gamma\in\Pi(\mathcal P_0,\mathcal P_1)}
\mathcal C_{\mathrm{rect}}(\Gamma).
\label{eq:app_opt_cost}
\end{equation}

\begin{proposition}
\label{prop:app_w2}
Under the above definitions,
\begin{equation}
\mathcal C_{\mathrm{opt}}(\mathcal P_0,\mathcal P_1)
=
W_2^2(\mathcal P_0,\mathcal P_1),
\label{eq:app_w2_result}
\end{equation}
where $W_2(\mathcal P_0,\mathcal P_1)$ is the $2$-Wasserstein distance under the Frobenius ground cost.
\end{proposition}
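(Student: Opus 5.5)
This statement is essentially definitional, so the plan is to unfold the definition of $W_2$ and check that it matches $\mathcal C_{\mathrm{opt}}$ term by term. Identify $\mathbb{R}^{L\times n_z}$ with $\mathbb{R}^{Ln_z}$, so that the Frobenius norm becomes the Euclidean norm. Then recall the Kantorovich definition
\[
W_2^2(\mathcal P_0,\mathcal P_1)=\inf_{\Gamma\in\Pi(\mathcal P_0,\mathcal P_1)}\int\|\mathbf{x}^1-\mathbf{x}^0\|_F^2\,\mathrm d\Gamma(\mathbf{x}^0,\mathbf{x}^1).
\]
Next, observe from the interpolation \eqref{eq:app_rectified_interp} that the velocity $\frac{\mathrm d}{\mathrm dt}\mathbf{x}_s^t=\mathbf{x}_s^1-\mathbf{x}_s^0$ is constant. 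Hence $\mathcal C_{\mathrm{rect}}(\Gamma)$ in \eqref{eq:app_rect_cost} is exactly the Kantorovich cost of $\Gamma$. It also equals the path kinetic energy $\int_0^1\mathbb E\|\dot{\mathbf{x}}_s^t\|_F^2\,\mathrm dt$, which is where the word ``rectified'' enters. Taking the infimum over the same set $\Pi(\mathcal P_0,\mathcal P_1)$ on both sides then gives \eqref{eq:app_w2_result}.

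The remaining work is to verify that both sides are well defined and finite, so that the identity is not a vacuous $\infty=\infty$. Use $\|a-b\|^2\le 2\|a\|^2+2\|b\|^2$. Under any coupling this bounds $\mathcal C_{\mathrm{rect}}(\Gamma)\le 2M_2(\mathcal P_0)+2M_2(\mathcal P_1)<\infty$, where $M_2$ denotes the second moment. The set $\Pi$ is nonempty because it contains the independent coupling $\mathcal P_0\otimes\mathcal P_1$, which is the one ProtoFlow uses in training. I would also note that the infimum is attained. The ground cost is continuous and nonnegative, and $\Pi(\mathcal P_0,\mathcal P_1)$ is tight and weakly compact. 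The map $\Gamma\mapsto\int c\,\mathrm d\Gamma$ is lower semicontinuous, so an optimal plan exists.

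An optional remark can connect the result to the dynamic (Benamou--Brenier) picture. For any path with endpoint coupling $\Gamma$, Jensen's inequality gives $\int_0^1\|\dot x\|^2\,\mathrm dt\ge\|x^1-x^0\|^2$, with equality for straight lines. So straight-line interpolation is optimal among all interpolations that share the same coupling. I expect no real obstacle here. The only delicate point is being explicit that $W_2$ is defined through the same coupling set and the same Frobenius ground cost, and that the finite-second-moment assumption excludes infinite values. Existence of an optimal coupling is needed only if one wants $\min$ in place of $\inf$.
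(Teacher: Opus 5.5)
Your proposal is correct and follows essentially the paper's argument. The paper likewise notes that the velocity $\mathbf{x}_s^1-\mathbf{x}_s^0$ is constant along the straight path, identifies $\mathcal C_{\mathrm{rect}}(\Gamma)$ with the Kantorovich cost under the Frobenius ground cost, and concludes by taking the infimum over the same coupling set $\Pi(\mathcal P_0,\mathcal P_1)$; your extra checks (finiteness via $\|a-b\|_F^2\le 2\|a\|_F^2+2\|b\|_F^2$, nonemptiness via the product coupling, and attainment of the infimum) are valid but go beyond anything the paper's proof addresses.
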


\begin{proof}
For a fixed pair $(\mathbf{x}_s^0,\mathbf{x}_s^1)$, the rectified path is given by Eq.~\eqref{eq:app_rectified_interp}. Differentiating with respect to $t$ gives
\begin{equation}
\frac{d\mathbf{x}_s^t}{dt}=\mathbf{x}_s^1-\mathbf{x}_s^0.
\label{eq:app_velocity}
\end{equation}
Since the velocity is constant along the rectified path, its path length is
\begin{equation}
\int_0^1
\left\|
\frac{d\mathbf{x}_s^t}{dt}
\right\|_F dt
=
\int_0^1
\|\mathbf{x}_s^1-\mathbf{x}_s^0\|_F dt
=
\|\mathbf{x}_s^1-\mathbf{x}_s^0\|_F.
\label{eq:app_length_single}
\end{equation}
Hence, the squared rectified path length is
\begin{equation}
\|\mathbf{x}_s^1-\mathbf{x}_s^0\|_F^2.
\end{equation}
Taking expectation under a coupling $\Gamma\in\Pi(\mathcal P_0,\mathcal P_1)$ gives
\begin{equation}
\mathcal C_{\mathrm{rect}}(\Gamma)
=
\mathbb E_{(\mathbf{x}_s^0,\mathbf{x}_s^1)\sim\Gamma}
\left[
\|\mathbf{x}_s^1-\mathbf{x}_s^0\|_F^2
\right].
\end{equation}
By definition, the squared $2$-Wasserstein distance between $\mathcal P_0$ and $\mathcal P_1$ under the Frobenius ground cost is
\begin{equation}
W_2^2(\mathcal P_0,\mathcal P_1)
=
\inf_{\Gamma\in\Pi(\mathcal P_0,\mathcal P_1)}
\mathbb E_{(\mathbf{x}_s^0,\mathbf{x}_s^1)\sim\Gamma}
\left[
\|\mathbf{x}_s^1-\mathbf{x}_s^0\|_F^2
\right].
\end{equation}
Combining this with Eq.~\eqref{eq:app_opt_cost} yields
\[
\mathcal C_{\mathrm{opt}}(\mathcal P_0,\mathcal P_1)
=
W_2^2(\mathcal P_0,\mathcal P_1).
\]
\end{proof}

\subsection{Implication for Codebook-Prior Initialization}

Let $\mathcal P_C$ denote the source distribution induced by the learned codebook prior used in ProtoFlow, and let
\begin{equation}
\mathcal P_G := \mathcal N(0,I_{Ln_z})
\end{equation}
denote the Gaussian prior in the same latent space.

\begin{corollary}
\label{cor:app_prior}
If the learned codebook prior is closer to the target latent distribution than the Gaussian prior in $2$-Wasserstein distance, namely
\begin{equation}
W_2(\mathcal P_C,\mathcal P_1)
<
W_2(\mathcal P_G,\mathcal P_1),
\label{eq:app_assumption}
\end{equation}
then
\begin{equation}
\mathcal C_{\mathrm{opt}}(\mathcal P_C,\mathcal P_1)
<
\mathcal C_{\mathrm{opt}}(\mathcal P_G,\mathcal P_1).
\label{eq:app_cor_result}
\end{equation}
Thus, under optimal source--target coupling, the codebook
prior yields a smaller expected squared path length for
the linear interpolation.
\end{corollary}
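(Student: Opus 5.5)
The plan is to reduce Corollary~\ref{cor:app_prior} directly to Proposition~\ref{prop:app_w2} and then use monotonicity of squaring on the nonnegative reals.

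\textbf{Step 1: check the hypotheses.} Proposition~\ref{prop:app_w2} needs both source distributions to have finite second moments.
\begin{itemize}
\item $\mathcal P_G=\mathcal N(0,I_{Ln_z})$ has second moment $Ln_z$.
\item For $\mathcal P_C$, each position of a sample is produced by the $\mathrm{Norm}(\cdot)$ map in Eq.~\eqref{eq:proto_source_position}. So every row of $\mathbf{x}_s^0$ has unit $\ell_2$ norm, or is bounded in any case, since the sampling procedure must handle the measure-zero event that the pre-normalization vector vanishes.
\item Hence $\|\mathbf{x}_s^0\|_F^2\le L$ almost surely, and $\mathcal P_C$ has finite (indeed bounded) second moment.
\item The target $\mathcal P_1$ is assumed to have finite second moment throughout the appendix. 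It is also a distribution of quantized unit-norm codebook embeddings, so it too is bounded.
\end{itemize}

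\textbf{Step 2: apply the proposition twice.} Applying Proposition~\ref{prop:app_w2} with $\mathcal P_0=\mathcal P_C$ and with $\mathcal P_0=\mathcal P_G$ gives
\[
\mathcal C_{\mathrm{opt}}(\mathcal P_C,\mathcal P_1)=W_2^2(\mathcal P_C,\mathcal P_1),
\qquad
\mathcal C_{\mathrm{opt}}(\mathcal P_G,\mathcal P_1)=W_2^2(\mathcal P_G,\mathcal P_1).
\]
Both quantities are finite, because $W_2^2(\mathcal P_0,\mathcal P_1)\le 2\,\mathbb E\|\mathbf{x}_s^0\|_F^2+2\,\mathbb E\|\mathbf{x}_s^1\|_F^2$. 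This bound comes from the independent coupling together with $\|a-b\|^2\le 2\|a\|^2+2\|b\|^2$.

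\textbf{Step 3: square the assumed inequality.} Since $W_2\ge 0$ and $u\mapsto u^2$ is strictly increasing on $[0,\infty)$, assumption \eqref{eq:app_assumption} gives $W_2^2(\mathcal P_C,\mathcal P_1)<W_2^2(\mathcal P_G,\mathcal P_1)$. Substituting the identities from Step 2 yields \eqref{eq:app_cor_result}.

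\textbf{Step 4: the closing sentence.} For the interpretive remark about path length, recall that by Eq.~\eqref{eq:app_length_single} the squared length of the linear path is $\|\mathbf{x}_s^1-\mathbf{x}_s^0\|_F^2$. Hence $\mathcal C_{\mathrm{opt}}$ is exactly the minimal expected squared path length over couplings. If desired, one can add that the infimum is attained: for finite-second-moment measures, an optimal coupling exists by standard optimal transport theory (tightness of $\Pi$ and lower semicontinuity of the cost). This makes "under optimal source--target coupling" literal rather than an infimum statement.

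\textbf{Expected obstacle.} The only step needing care is Step 1, namely that the codebook prior genuinely satisfies the moment hypothesis of the proposition. The normalization step makes this immediate. The rest is a one-line monotonicity argument.
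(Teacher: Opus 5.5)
Your proposal is correct and matches the paper's proof: apply Proposition~\ref{prop:app_w2} once with $\mathcal P_C$ and once with $\mathcal P_G$ to identify each optimal cost with $W_2^2$, then square the assumed strict inequality using nonnegativity of $W_2$. Your extra checks (bounded second moment of the normalized prior, finiteness of $W_2$, attainment of the infimum) are sound and are left implicit in the paper, but they do not change the argument.
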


\begin{proof}
By Proposition~\ref{prop:app_w2},
\begin{equation}
\mathcal C_{\mathrm{opt}}(\mathcal P_C,\mathcal P_1)
=
W_2^2(\mathcal P_C,\mathcal P_1),
\end{equation}
and
\begin{equation}
\mathcal C_{\mathrm{opt}}(\mathcal P_G,\mathcal P_1)
=
W_2^2(\mathcal P_G,\mathcal P_1).
\end{equation}
Under the assumption in Eq.~\eqref{eq:app_assumption}, and since both Wasserstein distances are nonnegative, squaring preserves the strict inequality:
\begin{equation}
W_2^2(\mathcal P_C,\mathcal P_1)
<
W_2^2(\mathcal P_G,\mathcal P_1).
\end{equation}
Substituting the above equalities gives Eq.~\eqref{eq:app_cor_result}.
\end{proof}

Corollary~\ref{cor:app_prior} shows that the advantage of codebook-prior initialization can be interpreted geometrically: whenever the learned codebook prior is closer to the target latent distribution than Gaussian initialization in Wasserstein distance, it yields a smaller optimal rectified transport cost.

\section{Related Work}
\label{sec:related_work}

We discuss two lines of research underlying ProtoFlow: vector-quantized latent modeling and diffusion/flow
matching for multivariate time series forecasting.

\noindent\textbf{Vector-quantized Latent Modeling.}
Vector quantization~\cite{van2017neural} maps continuous representations to a
finite set of learned embeddings, enabling discrete latent
modeling. Its promising performance in time series forecasting and generation has motivated growing interest in this representation strategy. For example, in time series forecasting task, VQ-TR
\cite{rasul2024vq} introduces quantization within attention for computationally efficient forecasting and HDT~\cite{feng2025hdt} formulates multivariate forecasting
through hierarchical discrete token modeling. Related generation methods explore diverse latent structures and generation strategies within VQ-based time series modeling. TimeVQVAE~\cite{lee2023vector} employs bidirectional prior modeling, while SDformer~\cite{chen2024sdformer} combines similarity-driven vector quantization with autoregressive or masked token modeling. MSDformer~\cite{feng2026msdformer} extends discrete modeling through multi-scale tokenization and autoregressive generation. SDFlow~\cite{li2026sdflow} instead performs non-autoregressive generation through variational flow matching in a frozen VQ latent space, incorporating categorical supervision over codebook indices. In addition, TOTEM~\cite{talukder2024totem} learns a shared vector-quantized tokenizer across diverse domains, enabling transferable discrete representations for forecasting, imputation, and anomaly detection tasks.  ProtoFlow additionally uses the codebook
as a source of distributional information, constructing
a prototype prior for continuous transport in the latent
embedding space.

\noindent\textbf{Diffusion and Flow Matching for Forecasting.}
Diffusion-based methods have demonstrated strong performance in time series analysis, particularly in multivariate forecasting, where iterative denoising enables flexible modeling of future uncertainty.
TimeGrad~\cite{rasul2021autoregressive}
combines diffusion with autoregressive prediction, while
CSDI~\cite{tashiro2021csdi} and
TimeDiff~\cite{shen2023non} investigate conditional
reconstruction and non-autoregressive prediction,
respectively. TSDiff~\cite{kollovieh2023predict} enables
forecasting with an unconditional diffusion model through
inference-time guidance. Subsequent methods incorporate
multi-granularity supervision MG-TSD
\cite{fan2024mg}, temporal dynamics DyDiff
\cite{guo2025dynamical}, and time-varying uncertainty NsDiff
\cite{ye2025non}. Although effective, iterative sampling
can impose substantial inference costs.

Flow matching learns transport from a source distribution to the data distribution through vector-field regression, enabling ODE-based generation that can support accurate sampling with fewer integration steps through suitable probability-path design~\cite{lipman2022flow,tong2023improving}.
For time series forecasting task, FlowTime
\cite{el2025probabilistic} develops autoregressive flow
matching in observation space, whereas TSFlow
\cite{kollovieh2025flow} incorporates Gaussian-process
priors to encode temporal structure and supports both
conditional and unconditional modeling. These approaches
motivate studying both transport dynamics and source
distributions. ProtoFlow explores this direction by constructing a source prior from learned latent prototypes, supporting accurate forecasting with few ODE sampling steps while facilitating earlier stabilization of flow matching training.

\noindent\textbf{Positioning of ProtoFlow.}
ProtoFlow connects vector-quantized representations with
prototype-prior flow matching. Its codebook serves two
roles: representing target sequences and defining a
structured source distribution through prototype centers,
empirical frequencies, and local variability. Conditioned
on historical context, a non-autoregressive vector field
jointly transports the future latent sequence, which is
then decoded into forecasts. \textbf{The distinguishing design is the reuse of learned representation structure to initialize latent transport, providing a data-informed source prior that supports earlier FM training stabilization and accurate forecasting with few sampling steps.} Our experiments examine
how this initialization affects FM training dynamics, few-step forecasting, and sensitivity to the sampling budget.

\section{More Implementation Details}
\label{imp_appendix}

\subsection{More Details on Datasets}\label{supp:more_dataset_details}
We conduct extensive evaluations of the proposed ProtoFlow framework on five widely-used real-world benchmark datasets for multivariate time series forecasting, covering representative application domains \cite{lai2018modeling} including \emph{Energy} (Solar and Electricity), \emph{Traffic} (Traffic and Taxi), and \emph{Society and Economics} (Wikipedia). Table~\ref{tab:VQ_data} summarizes the key characteristics of these datasets. All experiments are repeated three times, implemented in PyTorch \cite{paszke2019pytorch} and GluonTS \cite{alexandrov2020gluonts}.

\begin{table}[h]
  \centering
\resizebox{1.0\linewidth}{!}{\begin{tabular}{ccccccc}
\hline
  \textbf{DATASET} & Dimension & Domain & Freq & Total Time Steps & Pred Length & Rolling Windows \\ \hline
SOLAR  & 137  & $\mathbb{R}^{+}$ & Hourly & 7,009 & $\{48, 96\}$ & 7\\ 
ELECTRICITY  & 370 & $\mathbb{R}^{+}$ & Hourly & 5,790 & $\{48, 96\}$  & 7\\  
TRAFFIC   & 963 & (0,1) & Hourly & 10,413 & $\{48, 96\}$ & 7 \\  
TAXI    & 1214  & $\mathbb{N}$ & 30-Min & 1,488 & $\{48, 96\}$ & 57 \\ 
WIKIPEDIA   & 2000  & $\mathbb{N}$ & Daily & 792 & $\{48, 96\}$ & 5\\ \hline
\end{tabular}
}
  \caption{The dataset properties used in our experiments are summarized as follows.}
  \label{tab:VQ_data}
\end{table}
Specifically, the sources of these datasets are shown in the following:
\begin{itemize}
	\item \textbf{\small SOLAR } records 10-minute solar power production from 137 PV plants in Alabama in 2006;
    \vspace{-0.05in}
    \item \textbf{\small ELECTRICITY} comprises electricity consumption for $n=370$ clients measured at 15-minute intervals during 2012--2014;
    \vspace{-0.05in}
	\item \textbf{\small TRAFFIC} provides hourly road occupancy rates (normalized to $[0,1]$) collected by highway sensors in the San Francisco Bay Area over multiple years;
    \vspace{-0.05in}
	\item \textbf{\small TAXI} is a spatio-temporal series of New York City taxi rides aggregated every 30 minutes at 1214 locations, using January 2015 for training and January 2016 for testing as shown in GluonTS \cite{alexandrov2020gluonts};    \vspace{-0.05in}
	\item \textbf{\small WIKIPEDIA} contains daily page-view counts for 2000 Wikipedia pages of GluonTS repository \cite{alexandrov2020gluonts};
\end{itemize}

Following standard practice, we adopt a strictly chronological split based on a fixed cutoff time, partitioning each dataset into 60\% training, 20\% validation, and 20\% test. We further evaluate with a rolling-window protocol on the test segment: forecasts are initiated from multiple start times that are evenly spaced after the last training timestamp, and metrics are aggregated across windows. For hourly datasets, we use 7 rolling windows; for the remaining datasets we use 5 windows, except for Taxi where 57 windows are employed to cover the full test period.

\subsection{More Evaluation Protocols}\label{supp:more_impl_details}
We evaluate multivariate forecasting using
$\text{CRPS}_{\text{sum}}$ and
$\text{NRMSE}_{\text{sum}}$, which measure probabilistic
and point forecasting accuracy on the series obtained by
summing across variables at each time step. These metrics
assess how well predictions capture the aggregate behavior
of the \textbf{multivariate time series setting}. Below, we describe their
computation using $N=100$ generated forecast samples and
report dataset-specific hyperparameters for Stage 1 and
Stage 2.

\noindent \textbf{Metrics.} CRPS measures the compatibility of a cumulative distribution function $P$ with an observation $x$ as: $\operatorname{CRPS}(\mathcal{F}, x)=\int_{\mathbb{R}}(P(y)-\mathbb{I}\{x \leq y\})^{2} d y,$ where $\mathbb{I}\{x \leq y\}$is the indicator function which is one if $x \leq y$ and zero otherwise. The empirical CDF of $P$, i.e., $\hat{P}(y)=\frac{1}{N} \sum_{i=1}^{N} \mathbb{I}\left\{X_{i} \leq y\right\}$ with n samples $X_{i} \sim P$ as the approximation of the predictive CDF. It utilizes $N$ samples to estimate the empirical CDF and take the $\text{CRPS}_{\text{sum}}$ in the multivariate case.
\begin{equation}
\vspace{-0.1in}
\operatorname{CRPS_{\text {sum }}}=\mathbb{E}_{t}\left[\operatorname{CRPS}\left(\widehat{P}_{\text {sum }}(t), \sum_{i} x_{i}^{t}\right)\right].
\end{equation} \\
The Normalized Root Mean Squared Error (NRMSE) is a standardized version of the Root Mean Squared Error (RMSE) that accounts for the scale of the target values. The formula for NRMSE is given below:
\begin{equation}
\text{NRMSE}=\sqrt{\frac{1}{T} \sum_{t=1}^T\left(\frac{y_{t}-\hat{y_{t}}}{y_{max}-y_{min}}\right)^2}, 
\end{equation}
where $\hat{y_{t}}$ represents the predicted target, and $y_{t}$ represents the true target. $y_{max}$ and $y_{min}$ are the minimum and maximum of the measured target values, respectively. The NRMSE quantifies the average squared discrepancy between the predictions and actual observations, normalized by the range of the target values and $\text{NRMSE}_{\text{sum}}$ is the sum of the NRMSE in our 100 samples experimental setting. A lower NRMSE indicates higher predictive accuracy.

In this section, we report the detailed hyperparameter settings of ProtoFlow for both Stage 1 and Stage 2. In Table~\ref{tab:ts_data}, we list the codebook size and codebook dimension of the vector-quantization strategy for each dataset, together with the EMA decay coefficient ($\lambda$) used to smooth codebook updates and the coefficient ($\beta$) in Eqn.~\ref{eq:vq_loss_main}, which controls the trade-off between the commitment loss and the reconstruction loss. 
\begin{table}[h]
\centering
\resizebox{0.70\linewidth}{!}{
\begin{tabular}{lccccc}
\toprule
Datasets & Codebook Size & Codebook dim & Enc/Dec Layers & $\lambda$ & $\beta$ \\
\midrule
Solar &  256 & 128 & 2 & 0.99 & 0.1 \\
Electricity  &  256 & 256 & 2 &  0.99 &  0.1  \\
Traffic   &  256 & 256 & 2 & 0.99 &  0.1  \\
Taxi   &  256 & 512 & 2 &  0.99  & 0.05 \\
Wikipedia & 256 & 512 & 2 & 0.99 & 0.05 \\
\bottomrule
\end{tabular}
}
\caption{Detailed hyperparameters of Stage 1.}
\label{tab:ts_data}
\end{table}

For Stage 2, we report the number of layers and hidden dimension of $\text{DiT}_{\theta}$, the training batch size and learning rate, and the number of ODE sampling steps used at inference, which are shown in Table~\ref{tab:ts_data2}.
\begin{table}[h]
\centering
\resizebox{0.70\linewidth}{!}{
\begin{tabular}{lccccc}
\toprule
Datasets & Hidden dim & $\text{DiT}_{\theta}$ layers & Batch size & Steps & lr \\
\midrule
Solar &  128 & 4 & 64 & 3 & 5e-4 \\
Electricity  &  256 & 4 & 64 &  3 &  5e-4  \\
Traffic   &  256 & 5 & 64 & 3 &  3e-4  \\
Taxi   &  512 & 5 & 64 &  3  & 3e-4 \\
Wikipedia & 512 & 4 & 64 & 5 & 3e-4 \\
\bottomrule
\end{tabular}
}
\caption{Detailed hyperparameters of Stage 2.}
\label{tab:ts_data2}
\end{table}

\subsection{The Details of ProtoFlow}
\label{supp:details-of-VQFlow}

This section provides comprehensive details of ProtoFlow, including the architectural design of Stage 1 and Stage 2 and the full training and inference procedures.

\noindent\textbf{Overview.} We first describe the model architecture of ProtoFlow, including the design of the vector-quantized latent space (Stage 1) and the flow matching generator (Stage 2). We then provide pseudocode for the training and inference procedures.

\noindent\textbf{Stage 1: Vector-Quantized Latent Representation.} We detail the architectures of the Stage 1 model in Tables~\ref{tab:stage1_encoder} and ~\ref{tab:stage1_decoder}. Stage 1 consists of an encoder, a vector quantization module, and a decoder, which together learn a compact discrete latent representation of the input time series. The tables summarize the network structure, codebook design, and training configurations.

\begin{table}[h]
 \begin{center}
    \resizebox{0.95\linewidth}{!}{
 	\begin{tabular}{c|c|c}
 	\multicolumn{1}{c}{Layer} & \multicolumn{1}{c}{Function} & \multicolumn{1}{c}{Descriptions} \\
        \toprule
 	 1 & Convolution & input channel=$d$, output channel=D, kernel size=3, stride=1, padding=1\\
          2   & ReLU & nn.ReLU() \\
       3 & Convolution & input channel=D, output channel=D, kernel size=4, stride=2, padding=1\\
       4 &ResNet &input channel=D, depth=3, dilation growth rate=3 \\
       5   & ReLU & nn.ReLU() \\
       6 & Convolution & input channel=D, output channel=D, kernel size=4, stride=2, padding=1\\
       7 &ResNet &input channel=D, depth=3, dilation growth rate=3 \\
       8   & ReLU & nn.ReLU() \\
       9 & Convolution & input channel= $d_c$, output channel=H, kernel size=3, stride=1, padding=1\\
    \bottomrule
	\end{tabular}
	}
	 \end{center}
  \caption{
    The detailed architecture of the time serie tokenizer's encoder.
    }
  \label{tab:stage1_encoder}
\end{table}
\begin{table}[h]
 \begin{center}
    \resizebox{0.95\linewidth}{!}{
 	\begin{tabular}{c|c|c}
 	\multicolumn{1}{c}{Layer} & \multicolumn{1}{c}{Function} & \multicolumn{1}{c}{Descriptions} \\
        \toprule
 	 1 & Convolution & input channel=$d_c$, output channel=D, kernel size=3, stride=1, padding=1\\
          2   & ReLU & nn.ReLU() \\
        3 &ResNet &input channel=D, depth=3, dilation growth rate=3 \\
                4   & ReLU & nn.ReLU() \\
        5 &Upsample &nn.Upsample()\\
       6 & Convolution & input channel=D, output channel=D, kernel size=3, stride=1, padding=1\\
       7 &ResNet &input channel=D, depth=3, dilation growth rate=3 \\
       8   & ReLU & nn.ReLU() \\
       9 &Upsample &nn.Upsample()\\
       10 & Convolution & input channel=D, output channel=D, kernel size=3, stride=1, padding=1\\
        11   & ReLU & nn.ReLU() \\
        12 & Convolution & input channel=D, output channel=D, kernel size=3, stride=1, padding=1\\
        13   & ReLU & nn.ReLU() \\
        14 & Convolution & input channel=D, output channel=$d$, kernel size=3, stride=1, padding=1\\
    \bottomrule
	\end{tabular}
	}
	 \end{center}
  \caption{
    The detailed architecture of the time serie tokenizer's decoder.
    } 
  \label{tab:stage1_decoder}
\end{table}
\noindent\textbf{Stage 2: Flow Matching with $\text{DiT}_{\theta}$.} The architecture of the flow matching generator is illustrated in Figure~\ref{fig:details}. The figure provides a detailed view of the transformer blocks, conditioning mechanisms, and adaptive normalization used in Stage 2.

\noindent\textbf{Training and Inference Procedures.} We present the pseudocode of training and inference procedures for ProtoFlow in Algorithm~\ref{alg:protoflow_stage1}, \ref{alg:protoflow_stage2}, and \ref{alg:protoflow_inference}.

\begin{figure*}[!t]
\vspace{-0.1in}
  \centering
   \includegraphics[width=1.0\linewidth]{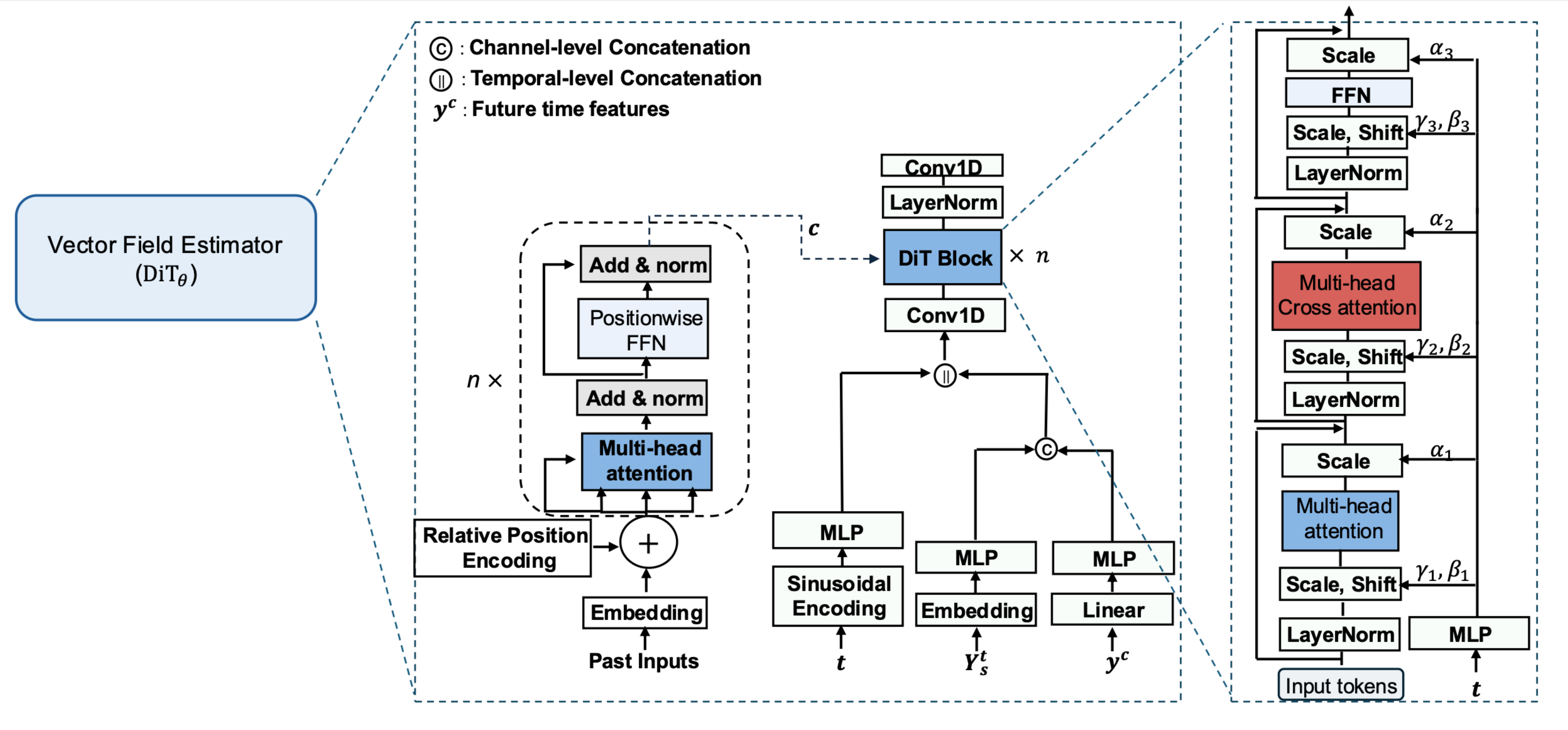}
   \vspace{-0.1in}
   \caption{Illustration of model architecture of $\text{DiT}_{\theta}$ of Stage2.}
   \label{fig:details}
\end{figure*}

\noindent\textbf{Codebook Reset Strategy in Stage 1.}
We maintain EMA statistics of assignment counts and
assigned feature sums for each codebook entry.
Let $u_k$ denote the smoothed assignment count and
$\mathcal{Z}_k^{\mathrm{EMA}}$ the normalized EMA
feature sum. To mitigate codebook collapse, we
reinitialize underutilized entries as
\begin{equation}
\mathcal{Z}_k
\leftarrow
\begin{cases}
h_k^{\mathrm{reset}},
& u_k<u_{\mathrm{th}},\\
\mathcal{Z}_k^{\mathrm{EMA}},
& u_k\geq u_{\mathrm{th}},
\end{cases}
\label{eq:reset}
\end{equation}
where $u_{\mathrm{th}}=1$ and
$h_k^{\mathrm{reset}}$ is a normalized replacement
candidate from the current batch.
Candidates are constructed from the first $K$
flattened encoder outputs. If fewer than $K$ are
available, we repeat them and add Gaussian noise
with per-coordinate standard deviation
$0.01/\sqrt{n_z}$ before selecting and normalizing
the first $K$ vectors.
Resetting replaces the codebook entries while
preserving their EMA statistics and is performed
only during Stage 1 training.

\begin{algorithm}[h]
\caption{ProtoFlow Stage 1: Vector-Quantized Tokenizer Training}
\label{alg:protoflow_stage1}
\begin{algorithmic}[1]
    \STATE \textbf{Input:} Training target sequences
    $\{\mathcal{X}_t\}$
    \STATE \textbf{Parameters:} Codebook size $K$,
    embedding dimension $n_z$, downsampling ratio $\rho$,
    commitment weight $\beta$
    \STATE \textbf{Output:} Encoder $\mathcal{E}_{\psi}$,
    decoder $\mathcal{G}_{\phi}$, codebook $\mathcal{Z}$

    \STATE Initialize $\mathcal{E}_{\psi}$,
    $\mathcal{G}_{\phi}$, and unit-normalized
    codebook $\mathcal{Z}$
    \WHILE{training is not complete}
        \STATE Sample a minibatch
        $\{\mathcal{X}_{t,b}\}_{b\in\mathcal{B}}$
        \STATE Encode and normalize latent vectors:
        $\hat{z}_b\gets
        \mathrm{Norm}(\mathcal{E}_{\psi}(\mathcal{X}_{t,b}))$
        \STATE For all $b,\ell$, assign
        $a_{b,\ell}\gets
        \arg\max_k
        \langle\hat{z}_b^\ell,\mathcal{Z}_k\rangle$
        \STATE Retrieve quantized embeddings:
        $z_{q,b}^{\ell}\gets\mathcal{Z}_{a_{b,\ell}}$
        \STATE Reconstruct:
        $\hat{\mathcal{X}}_{t,b}\gets
        \mathcal{G}_{\phi}(z_{q,b})$
        \STATE Compute the minibatch reconstruction
        and commitment loss using Eq.~\eqref{eq:vq_loss_main}
        \STATE Update $\psi,\phi$ using
        straight-through gradients through quantization
        \STATE Update $\mathcal{Z}$ using EMA,
        reset inactive entries, and normalize codebook entries
    \ENDWHILE
    \STATE Freeze $\mathcal{E}_{\psi}$,
    $\mathcal{G}_{\phi}$, and $\mathcal{Z}$
    \RETURN $\mathcal{E}_{\psi},
    \mathcal{G}_{\phi},\mathcal{Z}$
\end{algorithmic}
\end{algorithm}

\begin{algorithm}[htp]
\caption{ProtoFlow Stage 2: Codebook-Prior Flow Matching}
\label{alg:protoflow_stage2}
\begin{algorithmic}[1]
    \STATE \textbf{Input:} Training data
    $\mathcal{D}=\{(\mathcal{X}_p,\mathcal{X}_t,\mathbf{y}^{c})\}$,
    frozen encoder $\mathcal{E}_{\psi}$ and codebook $\mathcal{Z}$
    \STATE \textbf{Parameters:} Local rank $r$,
    perturbation scale $\gamma$,
    flow-time distribution $q_t$
    \STATE \textbf{Output:} Vector field $\mathbf{v}_{\theta}$,
    context encoder $\mathcal{H}_{\omega}$,
    prior parameters $\mathcal{P}$

    \STATE \textcolor{blue}{\textbf{// Construct the source prior}}
    \STATE Collect normalized encoder outputs and
    code assignments from training targets
    \STATE Estimate
    $\mathcal{P}=\{(\pi_k,\mu_k,B_k,\sigma_k)\}_{k=1}^{K}$
    from $\mathcal{Z}$ and the collected statistics
    \STATE Fix $\mathcal{P}$ and initialize
    $\mathbf{v}_{\theta},\mathcal{H}_{\omega}$

    \STATE \textcolor{blue}{\textbf{// Train conditional latent transport}}
    \WHILE{training is not complete}
        \STATE Sample a minibatch
        $\{(\mathcal{X}_{p,b},\mathcal{X}_{t,b},
        \mathbf{y}^{c}_b)\}_{b\in\mathcal{B}}$
        \STATE For each $b$, obtain the frozen target:
        $\mathbf{x}_{s,b}^{1}\gets
        q(\mathcal{E}_{\psi}(\mathcal{X}_{t,b}))$
        \STATE Construct conditions:
        $\mathbf{c}_b\gets
        (\mathcal{H}_{\omega}(\mathcal{X}_{p,b}),
        \mathbf{y}^{c}_b)$
        \STATE Independently sample
        $\mathbf{x}_{s,b}^{0}\sim P_{\mathcal{Z}}$
        using Eqs.~\eqref{eq:proto_source_position}
        and~\eqref{eq:proto_source}, and sample $t_b\sim q_t$
        \STATE Interpolate:
        $\mathbf{x}_{s,b}^{t_b}\gets
        (1-t_b)\mathbf{x}_{s,b}^{0}
        +t_b\mathbf{x}_{s,b}^{1}$
        \STATE Compute the velocity regression loss:
        \[
        \mathcal{L}_{\mathrm{FM}}
        \gets
        \frac{1}{|\mathcal{B}|}
        \sum_{b\in\mathcal{B}}
        \left\|
        \mathbf{v}_{\theta}
        (\mathbf{x}_{s,b}^{t_b},t_b\mid\mathbf{c}_b)
        -
        (\mathbf{x}_{s,b}^{1}-\mathbf{x}_{s,b}^{0})
        \right\|_F^2
        \]
        \STATE Update $\theta,\omega$ by minimizing
        $\mathcal{L}_{\mathrm{FM}}$
    \ENDWHILE
    \RETURN $\mathbf{v}_{\theta},
    \mathcal{H}_{\omega},\mathcal{P}$
\end{algorithmic}
\end{algorithm}

\begin{algorithm}[htp]
\caption{ProtoFlow Inference}
\label{alg:protoflow_inference}
\begin{algorithmic}[1]
    \STATE \textbf{Input:} Observed history $\mathcal{X}_p$,
    future temporal features $\mathbf{y}^{c}$,
    prediction length $\tau$
    \STATE \textbf{Trained components:}
    Decoder $\mathcal{G}_{\phi}$,
    context encoder $\mathcal{H}_{\omega}$,
    vector field $\mathbf{v}_{\theta}$,
    prior parameters
    $\mathcal{P}=\{(\pi_k,\mu_k,B_k,\sigma_k)\}_{k=1}^{K}$
    \STATE \textbf{Parameters:} Number of forecast samples $N$,
    downsampling ratio $\rho$, local rank $r$,
    perturbation scale $\gamma$,
    integration grid $0=t_0<\cdots<t_M=1$
    \STATE \textbf{Output:}
    Forecast samples
    $\{\hat{\mathcal{X}}_t^{(n)}\}_{n=1}^{N}$

    \STATE
    \STATE \textcolor{blue}{\textbf{// Encode forecasting conditions}}
    \STATE $L\gets\lfloor\tau/\rho\rfloor$
    \STATE $\mathbf{c}\gets
    (\mathcal{H}_{\omega}(\mathcal{X}_p),\mathbf{y}^{c})$

    \FOR{$n=1$ to $N$}
        \STATE
        \STATE \textcolor{blue}{\textbf{// Sample the source latent sequence}}
        \STATE Independently sample
        $s_{\ell}^{0}\sim\mathrm{Cat}(\pi_1,\ldots,\pi_K)$
        and $\varepsilon_{\ell}\sim\mathcal{N}(0,I_r)$
        for $\ell=1,\ldots,L$
        \STATE $\mathbf{x}_{s,\ell}^{0}\gets
        \mathrm{Norm}\!\left(
        \mu_{s_{\ell}^{0}}
        +\gamma B_{s_{\ell}^{0}}
        (\sigma_{s_{\ell}^{0}}\odot\varepsilon_{\ell})
        \right)$
        \STATE $\hat{\mathbf{x}}_s^{t_0}\gets
        \operatorname{StackRows}
        (\mathbf{x}_{s,1}^{0},\ldots,\mathbf{x}_{s,L}^{0})$

        \STATE
        \STATE \textcolor{blue}{\textbf{// Joint latent transport}}
        \FOR{$j=0$ to $M-1$}
            \STATE $\Delta t_j\gets t_{j+1}-t_j$
            \STATE $\hat{\mathbf{x}}_s^{t_{j+1}}\gets
            \hat{\mathbf{x}}_s^{t_j}
            +\Delta t_j\,
            \mathbf{v}_{\theta}
            (\hat{\mathbf{x}}_s^{t_j},t_j\mid\mathbf{c})$
        \ENDFOR

        \STATE
        \STATE \textcolor{blue}{\textbf{// Decode the forecast}}
        \STATE $\hat{\mathcal{X}}_t^{(n)}\gets
        \mathcal{G}_{\phi}(\hat{\mathbf{x}}_s^{t_M})$
    \ENDFOR

    \RETURN $\{\hat{\mathcal{X}}_t^{(n)}\}_{n=1}^{N}$
\end{algorithmic}
\end{algorithm}

\noindent\textbf{Visualization of Forecasting Results.}
Figure~\ref{fig:all_visualization} presents forecasting examples
from five datasets with varying dimensionalities.
In these visualizations, ProtoFlow generally follows
the main temporal variations in both relatively smooth
and more fluctuating series, illustrating its ability
to accommodate different temporal patterns.
\begin{figure*}[h!]
\begin{center}
   \includegraphics[width=1.0\textwidth]{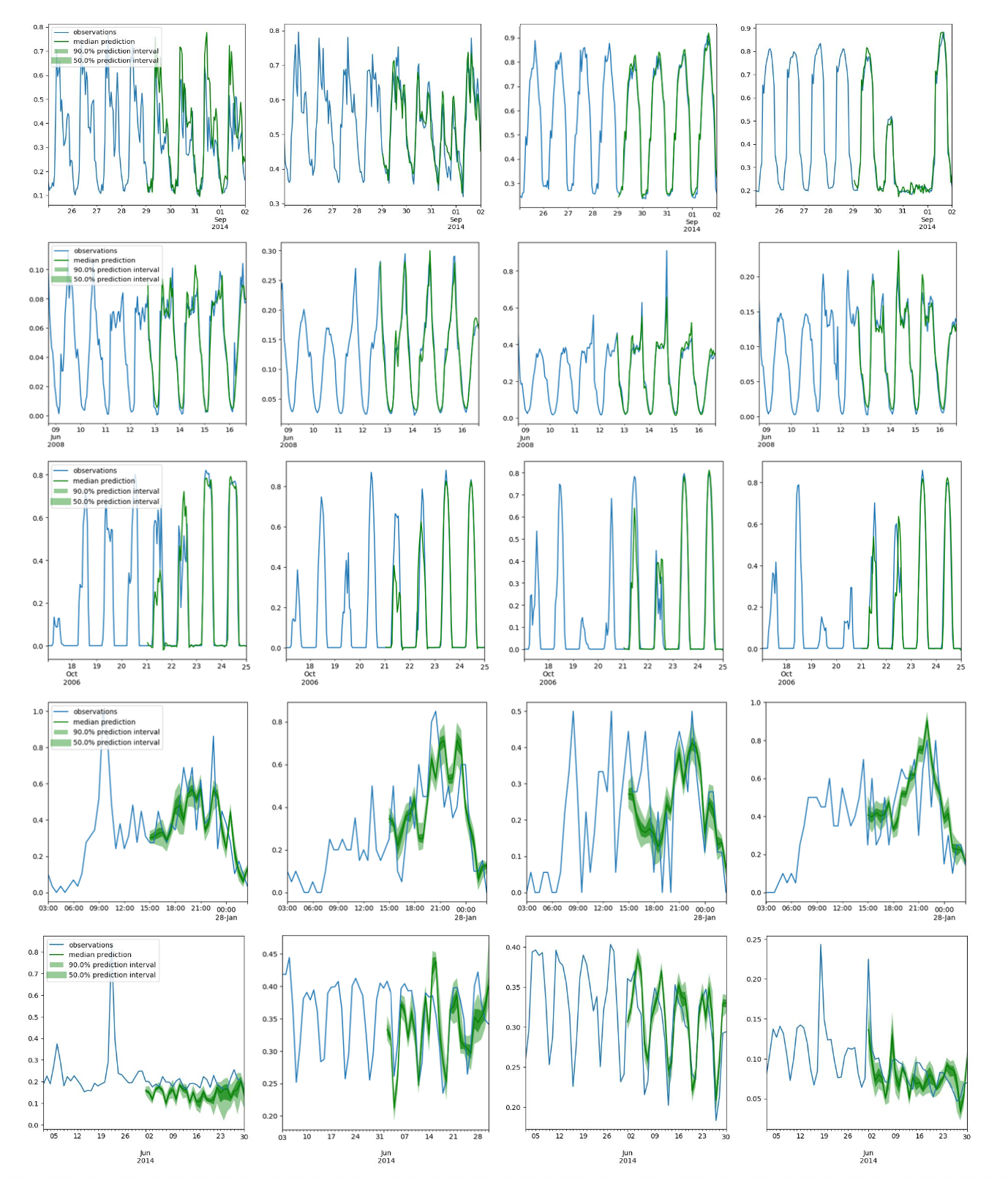}
  \vspace{-0.1in}
  \caption{ProtoFlow forecasts with a prediction horizon of 96. Rows show Electricity (370 dimensions), Traffic (963 dimensions), Solar (137 dimensions), Taxi (1214 dimensions), and Wikipedia (2000 dimensions)
from top to bottom, with four examples per dataset.}
  \label{fig:all_visualization}
\end{center}
\end{figure*}

\section{Limitations}
ProtoFlow constructs its prototype prior from global
training statistics and keeps it fixed during forecasting.
Although the learned transport is conditioned on
historical observations, the source prior does not
adapt its probability allocation to individual
forecasting contexts. A promising direction is to
develop context-dependent prototype weighting,
allowing initialization to reflect the current
temporal regime while retaining the efficiency
of few-step generation. We leave this direction for our future work.  
\end{document}